\newcommand{\cR}{{\mathcal R}}
\def\cF{\mathcal{F}}
\def\cV{\mathcal{V}}
\def\cT{\mathcal{T}}
\newcommand{\llbrace}{\{\!\!\{}
\newcommand{\rrbrace}{\}\!\!\}}
\newcommand\mset[1]{\llbrace{#1}\rrbrace}
\def\size{\mathit{size}}
\def\cm{\mathit{cm}}
\def\st{\uptau}
\def\sr{\uprho}
\def\sp{\upphi}
\let\ra\rightarrow
\def\T{\mathcal{T}}
\def\fv{\mathtt{fv}}
\def\bv{\mathtt{bv}}
\def\cut{\mu}
\def\deg{\mathfrak{d}}
\def\A{t}
\def\B{s}
\def\C{r}
\newcommand{\dom}{\mathit{Dom}}
\newcommand{\ran}{\mathit{Ran}}
\newcommand{\pos}{\mathit{Pos}}
\newcommand{{\hopsu}}{{\textsc{Hops}}}
\newcommand{{\stepsu}}{{\textsf{Step}}}
\newcommand{{\hoppu}}{{\textsc{Hop-PU}}}
\newcommand{{\steppu}}{{\textsf{Step-PU}}}
\newcommand{\varelimsu}{\mathsf{VarElim}}
\def\infrule#1#2#3{\@ifnextchar[{\@infrule{#1}{#2}{#3}}{\@infrule{#1}{#2}{#3}[*]}}%
\def\@infrule#1#2#3[#4]{
\par\bigbreak
\vtop{
\hangindent3em\hangafter2\leavevmode\null
\textsf{#1:}\kern.5em\textbf{#2}\\[\smallskipamount]
\ifx*#4 
\null\qquad$#3$
\else 
\setbox30=\hbox{\qquad$#3$\qquad#4}%
\ifdim\wd30>\hsize
 \null\qquad$#3$\par\kern-\parskip\smallskip#4
\else
 \null\qquad$#3$\qquad#4
\fi
\fi
}%
} \makeatother
\begin{document}
\title{Higher-Order Pattern Unification Modulo Similarity Relations}
%
%
\author{Besik Dundua\inst{1,2}
\and Temur Kutsia\inst{3}
}
\authorrunning{B. Dundua and T. Kutsia}
%
%
\institute{
VIAM, Tbilisi State University, Georgia 
\and Kutaisi International University, Georgia\\ 
\email{bdundua@gmail.com}
\and 
RISC, Johannes Kepler University, Linz, Austria\\
\email{kutsia@risc.jku.at}
}
\maketitle              
%

\begin{abstract}
The combination of higher-order theories and fuzzy logic can be useful in decision-making tasks that involve reasoning across abstract functions and predicates, where exact matches are often rare or unnecessary. Developing efficient reasoning and computational techniques for such a combined formalism presents a significant challenge. In this paper, we adopt a more straightforward approach aiming at integrating two well-established and computationally well-behaved components: higher-order patterns on one side and fuzzy equivalences expressed through similarity relations based on minimum T-norm on the other. We propose a unification algorithm for higher-order patterns modulo these similarity relations and prove its termination, soundness, and completeness. This unification problem, like its crisp counterpart, is unitary. The algorithm computes a most general unifier with the highest degree of approximation when the given terms are unifiable.

\keywords{Unification \and Higher-order patterns \and Fuzzy similarity relations}
\end{abstract}

\section{Introduction}
 \label{sect:intro}

Approximate reasoning involves making decisions or performing inferences based on vague or imprecise information, which is often modeled using fuzzy logic. Fuzzy similarity (and proximity) relations are key tools in this type of reasoning, allowing for working with uncertain or imprecise data. Fuzzy similarity refers to the degree to which two elements or objects are alike, using a value in the range of 0 to 1, where 0 represents no similarity and 1 represents complete similarity. These relations can be used to determine how similar a new situation is to past experiences or data. This is particularly useful in scenarios where precise comparisons are impossible or where ``similar enough'' is acceptable. (See, e.g.,~\cite{DBLP:journals/tfs/MilaneseP24,DBLP:journals/tcs/Sessa02,RISC6513,DBLP:conf/fqas/KrajciLMOV02,DBLP:journals/ijis/FontanaF02,DBLP:journals/fss/IranzoMR23,DBLP:conf/fuzzIEEE/SandriMMGC12,DBLP:journals/fss/LoiaSS04} in the context of using similarity relations in automated approximate reasoning.) 

Fuzzy similarity and proximity relations have been incorporated into knowledge representation and inference processes in the area of logic programming~\cite{DBLP:conf/agp/FormatoGS99,DBLP:journals/tcs/Sessa02,DBLP:journals/fss/IranzoR17,DBLP:conf/sac/ArcelliF99,DBLP:journals/ijar/IranzoMR20,DBLP:journals/eswa/IranzoS23,DBLP:conf/ssci/GuerreroMRS18}, making it possible to reason or compute with vague concepts having imprecise boundaries. It helps to deal with the ambiguity inherent in real-world data, where exact matches are rare, and allows systems to make reasonable inferences even with imprecise or incomplete information. In order for this approach to work, one needs a fundamental computational mechanism such as unification to make an ``approximate inference step''. Motivated by this application, first-order unification with fuzzy relations has been investigated both from theoretical and practical points of view, see, e.g., \cite{DBLP:conf/fuzzIEEE/PauK21,DBLP:journals/tcs/Sessa02,DBLP:conf/fqas/KrajciLMOV02,DBLP:journals/fss/MedinaOV04,DBLP:journals/fss/Ait-KaciP20,DBLP:journals/fuin/FormatoGS00,DBLP:journals/ijis/FontanaF02,DBLP:journals/entcs/Virtanen02,DBLP:journals/kybernetika/Vojtas00,DBLP:conf/lopstr/KutsiaP19,DBLP:conf/fuzzIEEE/IranzoS18,DBLP:journals/fss/IranzoR15,RISC6513,DBLP:conf/fscd/DunduaKMP20}. In \cite{DBLP:conf/ijcar/EhlingK24}, this technique was studied in a more general setting where the quantitative approximate information is specified using quantales, having the fuzzy quantale as a special case. 

The cited works focus on fuzzy reasoning and computation within a first-order framework. However, decision-making often requires higher-order reasoning across abstract layers of functions and predicates, where exact matches are rare or even unnecessary. For instance, when two functions are not identical but exhibit a high degree of similarity in behavior or output, fuzzy similarity relations would enable the system to recognize this resemblance and apply approximate reasoning for inference or prediction. This can be especially useful in fields such as natural language processing, knowledge representation, and complex decision-making systems, where human-like reasoning and the handling of uncertainty are essential. This idea is a primary motivation for our work.

It is important to note that the existing higher-order fuzzy logics (e.g., \cite{DBLP:conf/fuzzIEEE/Maruyama21, DBLP:journals/fss/Novak12a}) are highly expressive formalisms, but because of this expressive power, developing efficient reasoning and computational techniques for them is a significant challenge. We adopt a more straightforward approach, concentrating on the integration of established and computationally well-behaved fragments: higher-order patterns on one hand, and fuzzy equivalences expressed through minimum-T-norm-based similarity relations on the other. The first step in supporting reasoning for this integration is the development of fundamental computational techniques, such as unification, which is the subject of this paper.

\paragraph*{Our contribution.} We define the notion of similarity for simply-typed lambda terms and study the unification problem for higher-order patterns \`a la Miller~\cite{DBLP:journals/logcom/Miller91}. In this framework, the equality relation modulo $\alpha\beta\eta$-equivalence is replaced by fuzzy similarity modulo $\alpha\beta\eta$ with the minimum T-norm (G\"odel T-norm). We develop a rule-based unification algorithm for such problems and prove its termination, soundness, and completeness. The unification problem, like its crisp counterpart, is unitary, that is, any unifiable problem has a \emph{single} (modulo $\alpha\beta\eta$-equivalence) \emph{most general unifier} (mgu) with respect to fuzzy similarity using minimum T-norm. The algorithm returns such an mgu for the given terms, along with an associated approximation degree, which indicates how similar the instances of the terms are under the unifier. Importantly, the computed unifier has the maximal possible approximation degree. This degree must meet or exceed a user-defined threshold (also known as a cut value). When the threshold is set to the maximal value $1$, the algorithm computes standard (i.e., crisp) unifiers. In this case, it can be viewed as a rule-based version of the standard higher-order pattern unification algorithm.

Our presentation differs from traditional accounts of higher-order pattern unification and fuzzy unification by introducing a dedicated subalgorithm for variable elimination. This design has two main benefits. First, it simplifies the termination proof. Second, it enhances modularity, making it easier to integrate alternative T-norms: only the subalgorithm requires modification in such cases.

\paragraph{Organization.} Section~\ref{sect:prelim} introduces the basic notions and establishes fundamental properties. The core of the paper is Section~\ref{sect:sim:unif}, where we define the algorithm and prove its termination, soundness, and completeness. In Section~\ref{sect:discussion}, we discuss two topics: (a) the crisp counterpart of our algorithm and its relation to existing higher-order pattern unification algorithms; and (b) possible extensions using T-norms other than the minimum (G\"odel) T-norm employed in this work. Section~\ref{sect:concl} concludes the paper. 

\section{Preliminaries}
\label{sect:prelim}
\subsubsection*{Term Language.}

Given a set of basic types, whose elements are denoted by $\updelta$, \emph{simple types} are constructed  using the grammar $\uptau ::= \updelta \mid \uptau \to \uptau$, where $\uptau$ is associative to the right. The alphabet of our language consists of the set $\cV$ of variables and $\cF$ of constants. They are disjoint and countably infinite and their elements have assigned types. It is assumed that for each basic type, there is at least one function symbol (types are not empty). Variables are typically denoted by $F,G,H, X, Y, Z , x, y, z, \ldots$ and constants by $f, g,a,b,c,\ldots$. The set of \emph{terms} over $\cF$ and $\cV$, denoted by $\T(\cF,\cV)$, is defined by the grammar
\[\A ::= x \mid c \mid \lambda x.\A \mid (\A_1\,\A_2),\]
where $\lambda x. \A$ is called an \emph{abstraction} and $(\A_1 \, \A_2)$ is called an \emph{application}. We denote terms by $\A,\B,\C$. 

The following standard abbreviations are used: $(\A_1\, \A_2\, \A_3\, \cdots\, \A_n)$, $n\ge 3$, for $(\cdots ((\A_1\,\A_2)\,\A_3)\, \cdots \A_n)$, and $\lambda x_1,\ldots,x_n.\A$, $n\ge 2$, for $\lambda x_1. (\lambda x_2. (\ldots (\lambda x_n. \A)))$. 

A term $\A$ is said to \emph{have the type} $\st$ if either
\begin{itemize}
\item $\A$ is a constant or a variable of type $\st$,
\item $\A = \lambda x. \B$, the variable $x$ has type $\sr$, the term $\B$ has type $\sp$, and $\st = \sr \ra \sp$,
\item $\A=(\B\,\C)$, where the term $\B$ has type $\sr \ra \st$ and the term $\C$ has type $\sr$ for some $\sr$.
\end{itemize}

The standard concepts of the simply-typed $\lambda$-calculus, such as bound and free occurrences of variables, position in a term, $\alpha$-conversion, $\beta$-reduction, and $\eta$-long $\beta$-normal form, are defined in the usual way, see, e.g., \cite{DBLP:books/cp/BarendregtM22, DBLP:books/el/RV01/Dowek01}. The set of positions of a term $\A$ is denoted by $\pos(\A)$. The $\eta$-long $\beta$-normal form of a term $\A$ is denoted by $\A\updownarrow_\beta^\eta$. 
For any $\A$, the term $\A\updownarrow_\beta^\eta$ has the form $\lambda x_1, \dots x_n. (h\,t_1\, \cdots\, t_m)$, where $n, m \geq 0$, the symbol $h$ (called the \emph{head} of the term) is either a constant or a variable, and each $t_i$ (for $i = 1, \dots, m$) follows the same structural form. Moreover, the term $(h\,t_1\, \cdots\, t_m)$ has a basic type. We follow the standard convention of writing terms in $\eta$-long $\beta$-normal form as $\lambda x_1, \dots, x_n.h(t_1, \dots, t_m)$. When we write an equality between two $\lambda$-terms, we mean that they are equivalent modulo $\alpha$-, $\beta$-, and $\eta$-equivalence. The size of a term $\A$, denoted $\size(\A)$, is defined recursively as $\size(x)= \size(f)=1$, $\size((\A_1\, \A_2)) = \size(\A_1)+\size(\A_2)$, and $\size(\lambda x.\A) = 1 + \size(\A)$. 

The sets of free and bound variables of a term $\A$ are denoted by $\fv(\A)$ and $\bv(\A)$, respectively. As a convention and for the sake of clarity, in what follows, we distinguish bound and free variables syntactically. In particular, we use lowercase letters $x, y, z$ for bound variables and capital letters $X, Y, Z, F, G, H$ for free variables. A term is called \emph{rigid} if its head symbol is a constant or a bound variable, and it is called \emph{flexible}, if its head symbol is a free variable.

\begin{definition}[Higher-order patterns]
A higher-order pattern is a term where, when written in $\eta$-long $\beta$-normal form, all
free variable occurrences are applied to lists of pairwise distinct ($\eta$-long forms of) bound
variables.
\end{definition}

A \emph{substitution} $\sigma$ is a mapping from variables to terms such that for each variable $x$ of type $\st$, the term $\sigma(x)$ is of type $\st$, and all but finitely many variables are mapped to themselves (modulo $\alpha\beta\eta$). Greek letters $\sigma,\vartheta,\varphi,\tau,\varepsilon$ are used for substitutions, where $\varepsilon$ denotes the identity substitution. Each substitution $\sigma $ is represented as a finite set of pairs $\{x_1\mapsto \sigma(x_1),\ldots,$ $ x_n\mapsto \sigma(x_n)\}$ where the $x$'s are variables for which $\sigma(x_i) \neq x_i$. The sets $\dom(\sigma)=\{x_1,\ldots,x_n\}$ and $\ran(\sigma)=\{\sigma(x_1),\ldots, \sigma(x_n)\}$ are called the \emph{domain} and the \emph{range} of $\sigma $, respectively.
The set $\fv(\sigma)$ is defined as $\fv(\sigma)=\dom(\sigma)\cup \fv(\ran(\sigma))$ where $\fv(\ran(\sigma))=  \cup_{i=1}^n \fv(\sigma(x_i))$

The \emph{application} of a substitution $\sigma $ to $\A$ replaces each \emph{free} occurrence of a variable $x$ in $\A$ with $\sigma(x)$. It is defined inductively: 
\begin{alignat*}{7}
&x \sigma       =  \sigma(x), &\qquad  & \text{ if $x\in\dom(\sigma)$.} \\
&x \sigma       =  x, & & \text{ if $x\notin\dom(\sigma)$.} \\ 
&f \sigma       =  f. & \\
& (\lambda x.\A) \sigma = \lambda x  .\A \sigma,  & & \text{ if $x\notin\fv(\sigma)$.} \\
&(\lambda x.\A) \sigma = \lambda y  .\A\{x\mapsto y\} \sigma,  & & \text{ if $x\in\fv(\sigma)$ and $y$ is fresh.} \\
& (\A\, \B) \sigma    = (\A \sigma \,\B \sigma). 
\end{alignat*}

\subsubsection*{Fuzzy Relations.}

We define basic notions about fuzzy relations following \cite{DBLP:journals/tcs/Sessa02}.
A binary \emph{fuzzy relation} on a set $S$ is a mapping from $S\times S$ to the real interval $[0,1]$. If $\cR$ is a fuzzy relation on $S$ and $\cut$ is a number $0<\cut \le 1$ (called \emph{cut value}), then the \emph{$\cut$-cut} of $\cR$ on $S$, denoted $\cR_\cut$, is an ordinary (crisp) relation on $S$ defined as
$\cR_\cut := \{(s_1,s_2) \mid \cR(s_1,s_2) \ge \cut \}$. 

A \emph{T-norm} $\wedge$ is an associative, commutative, non-decreasing binary operation on $[0,1]$ with 1 as the unit element. Some of the most prominent T-norms are

\begin{itemize}
    \item G\"odel (or minimum) T-norm: $s_1\wedge s_2=\min(s_1,s_2)$,
    \item Product T-norm: $s_1\wedge s_2=s_1 * s_2$,
    \item {\L}ukasiewicz T-norm: $s_1\wedge s_2=\max(0, s_1+s_2-1)$.
\end{itemize}

\begin{definition}[Similarity relation]
A fuzzy relation $\cR$ on a set $S$ is called a \emph{proximity relation}, if it is reflexive ($\cR(s,s)=1$ for all $s\in S$) and symmetric ($\cR(s_1,s_2)=\cR(s_2,s_1)$ for all $s_1,s_2\in S$).  A proximity relation is called a \emph{similarity relation} if it is $\wedge$-transitive: $\cR(s_1,s_2)\ge \cR(s_1,s) \wedge \cR(s,s_2)$ for any $s_1,s_2,s\in S$.
\end{definition}

In the role of $S$, we take the set of terms of our language. First, we assume a fuzzy relation $\cR_A$ to be defined on the alphabet $\cF\cup \cV$ in such a way that

\begin{itemize}
    \item $\cR_A(x,y)=0$ for all $x,y\in \cV$ with $x\neq y$,
    \item $\cR_A(f,g)=0$ for all $f,g\in \cF$ such that $f$ and $g$ have different types. 
    \item $\cR_A(x,f)=\cR_A(f,x)=0$ for all $x\in\cV$ and $f\in \cF$.
\end{itemize}

\begin{definition}[Fuzzy relation on terms]
Given a fuzzy relation $\cR_A$ on the alphabet $\cF\cup \cV$, we define a fuzzy relation $\cR$ on the set of terms $\cT(\cF,\cV)$ using the T-norm $\wedge$.  

\begin{enumerate}
    \item If $\A$ and $\B$ are in $\eta$-long $\beta$-normal form, then $\cR(\A,\B)$ is defined as follows:
\begin{align*}
    &\cR(a,b)=\cR_A(a,b), \text{ where $a,b \in \cF \cup \cV$,} \\
    &\cR((\A_1\, \B_1),(\A_2\, \B_2))=  \cR(\A_1,\A_2) \land \cR(\B_1,\B_2),\\
    &\cR(\lambda x. \A, \lambda y. \B) = \cR(\A\{x\mapsto z\}, \B\{y\mapsto z\}), \\
    & \qquad \text{where $x$, $y$, and $z$ have the same type and $z$ is a fresh variable,}\\
    & \cR(\A, \B) = 0 \text{ otherwise.}
\end{align*}
\item Otherwise, $\cR(\A,\B) = \cR(\A\updownarrow_\beta^\eta,\B\updownarrow_\beta^\eta)$.
\end{enumerate}
\end{definition}

It is easy to see that if $\cR_A$ is a proximity relation on the alphabet, then $\cR$ is a proximity relation on terms for any T-norm. This definition also implies that if $\cR(\A,\B)>0$, then 

\begin{itemize}
    \item $\A$ and $\B$ have the same type,
    \item $\pos(\A\updownarrow_\beta^\eta)=\pos(\B\updownarrow_\beta^\eta)$,
    \item for each $p\in \pos(\A\updownarrow_\beta^\eta)$, symbols occurring at position $p$ in $\A\updownarrow_\beta^\eta$ and $\B\updownarrow_\beta^\eta$ have the same type,
    \item a variable (resp. a constant) occurs at position $p$ in $\A\updownarrow_\beta^\eta$ iff a variable (resp. a constant) occurs at position $p$ in $\B\updownarrow_\beta^\eta$,
    \item a free variable $X$ occurs at position $p$ in $\A\updownarrow_\beta^\eta$ iff the same variable $X$ occurs at position $p$ in $\B\updownarrow_\beta^\eta$.
\end{itemize}

Note that if the T-norm is not idempotent, one can find a substitution $\sigma$ such that $\cR(\A,\B)>0$ does not imply $\cR(\A,\B)=\cR(\A\sigma,\B\sigma)$. For instance, for the product T-norm, $\cR(a,b)=0.5$, and $\sigma=\{X\mapsto \lambda x.f(x,x)\}$ we have
\begin{align*}
   & \cR(X(a), X(b)) = 0.5, \\
   & \cR(X(a)\sigma, X(b)\sigma) = \cR(f(a,a), f(b,b))=0.5 * 0.5  = 0.25.
\end{align*}

However, this problem does not arise when the T-norm is idempotent, or when terms are higher-order patterns: 

\begin{proposition} Let $\cR$ be a similarity relation on terms using the T-norm $\land$, $\A$ and $\B$ be terms, and $\sigma$ be a substitution. Then $\cR(\A,\B)>0$ implies $\cR(\A,\B)=\cR(\A\sigma,\B\sigma)$ if
\begin{enumerate*}[label=(\alph*)]
    \item $\land$ is idempotent, or
    \item $\A$ and $\B$ are higher-order patterns. 
\end{enumerate*}
\end{proposition}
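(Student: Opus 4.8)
The plan is to argue by induction on the common $\eta$-long $\beta$-normal structure of $\A$ and $\B$. Since $\cR(\A,\B)=\cR(\A\updownarrow_\beta^\eta,\B\updownarrow_\beta^\eta)$ and $\A\sigma =_{\alpha\beta\eta}(\A\updownarrow_\beta^\eta)\sigma$, I may assume from the outset that $\A$ and $\B$ are in $\eta$-long $\beta$-normal form. The hypothesis $\cR(\A,\B)>0$, together with the structural consequences listed after the definition of $\cR$, guarantees that $\A$ and $\B$ share the same position set, with matching binders (modulo $\alpha$), matching constants, and the \emph{same} free variable at each flexible head position; this common skeleton is what the induction follows. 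A useful preliminary observation is that $0$ is absorbing for every T-norm (from $0\land 1=0$ and monotonicity), so whenever $\cR(\A,\B)>0$ decomposes through the application rule into a finite $\land$ of positional factors, every factor is itself positive. This is what licenses applying the induction hypothesis to subterms.

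For the abstraction case I push $\sigma$ under the binder using a fresh bound variable and appeal to the $\lambda$-rule for $\cR$, reducing to terms of basic type $\A=h(t_1,\dots,t_m)$ and $\B=h'(s_1,\dots,s_m)$. If the head $h$ is a constant or a bound variable, then $h'=h$ by the structural consequences and $h$ is untouched by $\sigma$; the application rule gives $\cR(\A,\B)=\cR_A(h,h')\land\bigwedge_{i=1}^m\cR(t_i,s_i)$ and likewise $\cR(\A\sigma,\B\sigma)=\cR_A(h,h')\land\bigwedge_{i=1}^m\cR(t_i\sigma,s_i\sigma)$, so the claim follows from the induction hypothesis applied to each (positive) factor $\cR(t_i,s_i)$. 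This rigid case is uniform for both (a) and (b).

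The flexible case, where $h=h'=F$ is a free variable, is the heart of the argument. Here $\cR(\A,\B)=\bigwedge_{i=1}^m\cR(t_i,s_i)$, and writing $\sigma(F)=\lambda z_1,\dots,z_m.w$ we have that $\A\sigma$ and $\B\sigma$ are the normal forms of $w\vartheta$ and $w\varphi$, where $\vartheta=\{z_i\mapsto t_i\sigma\}$ and $\varphi=\{z_i\mapsto s_i\sigma\}$. Under hypothesis (b) the situation collapses pleasantly: each $t_i$ and $s_i$ is the $\eta$-long form of a bound variable, and since $\cR_A$ assigns $0$ to distinct variables while the $\lambda$-rule renames corresponding binders to a common fresh variable, positivity of $\cR(t_i,s_i)$ forces these bound variables to coincide, so $t_i=s_i$ and hence $\A=\B$ on this subterm; reflexivity of $\cR$ then yields $\cR(\A\sigma,\B\sigma)=1=\cR(\A,\B)$, with no recursion needed. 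Under hypothesis (a) I would instead prove a substitution lemma stating that, for idempotent $\land$, $\cR(w\vartheta,w\varphi)=\bigwedge\{\cR(t_i\sigma,s_i\sigma)\mid z_i\in\fv(w)\}$, by induction on $w$, the point being that idempotency ($c\land c=c$) makes repeated occurrences of a single $z_i$ in $w$ contribute only once; combined with the induction hypothesis $\cR(t_i\sigma,s_i\sigma)=\cR(t_i,s_i)$ this recovers $\bigwedge_i\cR(t_i,s_i)$.

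The step I expect to be the main obstacle is exactly this flexible case. For (b) the subtlety is purely the propagation of positivity to the identity of the bound-variable arguments, which is precisely where the pattern restriction (a free variable applied to \emph{distinct} bound variables) is used. For (a) the delicate point is the occurrence analysis inside $w$: idempotency neutralizes the \emph{duplication} of an argument that $\sigma$ may introduce — the very phenomenon that breaks the non-idempotent product-T-norm example preceding the statement — and the substitution lemma must be stated so as to track precisely which argument positions survive in $w$ and to verify that the resulting multiset of $\land$-factors reproduces the original value. Controlling how $\sigma$ redistributes the occurrences of the $z_i$ across $w$, and checking that this leaves the $\land$-value unchanged, is the crux of the whole proposition.
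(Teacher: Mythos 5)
Your overall strategy coincides with the paper's: reduce to $\eta$-long $\beta$-normal forms, induct on the common structure, split on rigid versus flexible heads, use the pattern restriction in case (b) to force syntactic equality of flexible subterms, and use idempotence in case (a) to absorb duplicated argument positions. Case (b) is handled exactly as in the paper and is correct. The genuine gap is in case (a), and it is \emph{erasure}, not duplication. Your substitution lemma already hints at the problem by restricting the meet to $z_i\in\fv(w)$: if $\sigma(F)$ discards an argument position, i.e.\ some $z_i\notin\fv(w)$, then the lemma (even if true) gives only a meet over a \emph{subset} of the factors $\cR(t_i\sigma,s_i\sigma)$, which can be strictly larger than $\bigwedge_{i=1}^m\cR(t_i,s_i)=\cR(\A,\B)$; so your final step ``this recovers $\bigwedge_i\cR(t_i,s_i)$'' is a non-sequitur. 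Worse, the lemma as stated is itself false, because erasure can also happen \emph{through} an applied occurrence after $\beta$-normalization: take $w=z_1(z_2)$ with both $z_1,z_2\in\fv(w)$, and $t_1\sigma=s_1\sigma=\lambda x.c$; then $w\vartheta=w\varphi=c$ no matter how dissimilar $t_2\sigma$ and $s_2\sigma$ are. Concretely, with $\cR(a,b)=0.5$, $\A=X(\lambda x.c,\,a)$, $\B=X(\lambda x.c,\,b)$, and $\sigma=\{X\mapsto \lambda z_1,z_2.\,z_1(z_2)\}$, one gets $\cR(\A,\B)=0.5>0$ but $\A\sigma=\B\sigma=c$, hence $\cR(\A\sigma,\B\sigma)=1\neq 0.5$. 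So in the idempotent non-pattern case the claimed \emph{equality} cannot be proved by any repair of your lemma; what survives is only the inequality $\cR(\A\sigma,\B\sigma)\ge\cR(\A,\B)$.

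In fairness, the paper's own proof of case (a) has the same blind spot: its ``$r$ linear'' step writes $\cR(\A\sigma,\B\sigma)=\cR(\A'_1,\B'_1)\land\cdots\land\cR(\A'_m,\B'_m)$, which tacitly assumes that every bound variable $y_i$ of $X\sigma$ actually occurs in $r$ and that substituting the arguments into $r$ creates no new $\beta$-redexes; the counterexample above (where $r=z_1(z_2)$ is linear) defeats exactly that equation. So your instinct that the flexible case with an instantiated head is ``the crux'' is right, and your bookkeeping via $\fv(w)$ is more careful than the paper's; but the argument you propose cannot close case (a) as an equality, because the equality is false for erasing substitutions. A correct statement would either weaken (a) to $\cR(\A\sigma,\B\sigma)\ge\cR(\A,\B)$, or add a non-erasure hypothesis; case (b) avoids the issue precisely because positivity forces the flexible subterms to be identical, so whatever $\sigma$ erases is erased identically on both sides.
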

\begin{proof}
    Assume without loss of generality that $\A$ and $\B$ are in $\eta$-long $\beta$-normal form. They have the same structure, since $\cR(\A,\B)>0$. We proceed by structural induction.

    $\A$ and $\B$ are variables. Then $\A=\B$, $\A\sigma=\B\sigma$, and $\cR(\A\sigma,\B\sigma)=\cR(\A,\B)=1$.

    $\A$ and $\B$ are constants. Then $\A\sigma=\A$, $\B\sigma=\B$, and $\cR(\A\sigma,\B\sigma)=\cR(\A,\B)$. 

    $\A = \lambda x_1,\ldots,x_n.h_1(\A'_1,\ldots,\A'_m)$ and $\B = \lambda x_1,\ldots,x_n.h_2(\B'_1,\ldots,\B'_m)$, where $h_1$ and $h_2$ are either both variables or both constants. Then $\cR(\A,\B)= \cR_A(h_1,h_2) \land \cR(\A'_1,\B'_1)\land \cdots \land \cR(\A'_m,\B'_m)$. By the induction hypothesis, $\cR(\A'_i,\B'_i)= \cR(\A'_i\sigma,\B'_i\sigma)$ for all $1\le i \le m$. 
    \begin{itemize}
        \item If $h_1$ and $h_2$ are constants or bound variables, then $h_1\sigma=h_1$, $h_2\sigma=h_2$ and $\cR(\A\sigma,\B\sigma)= \cR(\A,\B)$. 
        \item If $h_1$ and $h_2$ are free variables, then they should be the same (since $\cR(\A,\B)>0$), say $X$. We consider two cases:
        \begin{itemize}
            \item $\A$ and $\B$ are patterns. Then $\A'_i=\B'_i \in \{x_1,\ldots,x_n\}$. Therefore, $\A=\B$ and $\cR(\A\sigma, \B\sigma)=\cR(\A, \B)=1$. 
            \item $\A$ and $\B$ are not patterns. If $X\notin \dom(\sigma)$, then the reasoning is like for bound variables above. If $X\in \dom(\sigma)$, then let $X\sigma = \lambda y_1,\ldots,y_m.\C$. Then 
    \begin{align*}
        \A\sigma ={} & \lambda x_1,\ldots,x_n.r\{y_1\mapsto \A'_1\}\cdots \{y_m \mapsto \A'_m\}, \\
        \B\sigma ={} & \lambda x_1,\ldots,x_n.r\{y_1\mapsto \B'_1\}\cdots \{y_m \mapsto \B'_m\}.
    \end{align*}
           If $r$ is linear (i.e., no $y_i$ occurs in it more than once), then $\cR(\A\sigma, \B\sigma) = \cR(\A'_1,\B'_1)\land \cdots \land \cR(\A'_m,\B'_m) = \cR(\A,\B)$. Now let $\land$ be idempotent and assume without loss of generality that $y_1$ appears in $r$ twice. Then $\cR(\A\sigma, \B\sigma) = \cR(\A'_1,\B'_1)\land \cR(\A'_1,\B'_1)\land \cdots \land \cR(\A'_m,\B'_m) = {}$ (by idempotence of $\land$) ${} = \cR(\A'_1,\B'_1)\land \cdots \land \cR(\A'_m,\B'_m) = \cR(\A,\B)$. \qed
        \end{itemize}
    \end{itemize}
\end{proof}

\begin{theorem}
    If $\cR_A$ is a similarity relation on the alphabet, then $\cR$ is a similarity relation on terms for any T-norm.
\end{theorem}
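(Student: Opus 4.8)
The plan is to establish $\wedge$-transitivity of $\cR$ on terms, since reflexivity and symmetry of $\cR$ already follow from the fact, noted just before the Proposition, that $\cR$ is a proximity relation on terms whenever $\cR_A$ is one on the alphabet. Concretely, I would fix arbitrary terms $\A$, $\B$, $\C$ and prove
\[
\cR(\A,\B) \ge \cR(\A,\C) \wedge \cR(\C,\B).
\]
Because $\cR(\A,\B)=\cR(\A\updownarrow_\beta^\eta,\B\updownarrow_\beta^\eta)$ and likewise for the other pairs, I may assume without loss of generality that $\A$, $\B$, $\C$ are already in $\eta$-long $\beta$-normal form.

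First I would dispose of the degenerate case. If $\cR(\A,\C)=0$ or $\cR(\C,\B)=0$, then the right-hand side is $0$: for any T-norm, monotonicity together with $s\le 1$ gives $0\wedge s \le 0\wedge 1 = 0$, so $0\wedge s = 0$, and the inequality holds trivially since $\cR(\A,\B)\ge 0$. From now on I assume $\cR(\A,\C)>0$ and $\cR(\C,\B)>0$. By the structural consequences of positivity recorded after the definition of $\cR$ (same positions, same head kind at each position, same free variable at each position), $\A$ and $\C$ share a common syntactic structure, as do $\C$ and $\B$; hence $\A$, $\B$, and $\C$ all have the same structure. This is exactly what makes the induction well-founded and guarantees that the recursive clauses of $\cR$ align position by position across the three terms.

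Then I would argue by structural induction following the defining clauses of $\cR$. In the base case all three terms are atoms of the same kind, and the claim is precisely the $\wedge$-transitivity of $\cR_A$ on $\cF\cup\cV$. For an application, write $\A=(\A_1\,\A_2)$, $\C=(\C_1\,\C_2)$, $\B=(\B_1\,\B_2)$; since $\cR(\A,\B)=\cR(\A_1,\B_1)\wedge\cR(\A_2,\B_2)$ and $\wedge$ is non-decreasing, the induction hypotheses $\cR(\A_i,\B_i)\ge\cR(\A_i,\C_i)\wedge\cR(\C_i,\B_i)$ yield
\[
\cR(\A,\B)\ge \bigl(\cR(\A_1,\C_1)\wedge\cR(\C_1,\B_1)\bigr)\wedge\bigl(\cR(\A_2,\C_2)\wedge\cR(\C_2,\B_2)\bigr),
\]
and commutativity and associativity of $\wedge$ regroup the right-hand side into $\bigl(\cR(\A_1,\C_1)\wedge\cR(\A_2,\C_2)\bigr)\wedge\bigl(\cR(\C_1,\B_1)\wedge\cR(\C_2,\B_2)\bigr)$, i.e.\ $\cR(\A,\C)\wedge\cR(\C,\B)$. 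For an abstraction, positivity forces the three bound variables to share a type, so after $\alpha$-renaming I substitute one common fresh variable $z$ into all three bodies and apply the induction hypothesis to them.

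I expect the delicate points to be bookkeeping rather than conceptual. The T-norm regrouping in the application case rests solely on commutativity, associativity, and monotonicity, precisely the T-norm axioms, and it is the place where one must avoid invoking idempotence, which is unavailable for a general T-norm. The reduction to $\eta$-long $\beta$-normal forms and the alignment of structure across the three terms must be stated explicitly, since it is the structural consequences of $\cR(\cdot,\cdot)>0$ that ensure the inductive clauses match position by position and that the common fresh variable in the abstraction case is well typed.
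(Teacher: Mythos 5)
Your proposal is correct and follows essentially the same route as the paper's proof: reduce to $\wedge$-transitivity, normalize the terms, use the structural consequences of positivity to align the three terms, and then do a structural induction with the atom, application, and abstraction cases handled exactly as the paper does (relying only on monotonicity, associativity, and commutativity of the T-norm). The only difference is cosmetic: you spell out the degenerate zero case explicitly, which the paper dismisses with a ``without loss of generality.''
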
    
\begin{proof} 
    We only need to show that for arbitrary terms $\A, \B,\C$ in $\eta$-long $\beta$-normal form and a T-norm $\wedge$, the $\wedge$-transitivity property holds: $\cR(\A,\C)\ge \cR(\A,\B)\land \cR(\B,\C)$. Assume without loss of generality that $\cR(\A,\B)\land \cR(\B,\C) >0$. Then by the monotonicity of T-norms we have $\cR(\A,\B)>0 $ and $\cR(\B,\C) >0$, which implies that $\A, \B$, and $\C$ have the same structure. We proceed by induction on this structure. 

\begin{itemize}
    \item When $\{\A, \B,\C\}\in \cF \cup \cV$, the property follows from the assumption that $\cR_A$ is a similarity.
    \item Let $\A=(\A_1\,\A_2)$, $\B=(\B_1\,\B_2)$, and $\C=(\C_1\,\C_2)$. 
    By definition of $\cR$, we have $\cR(\A,\C)= \cR(\A_1,\C_1) \land \cR(\A_2,\C_2)$, $\cR(\A,\B) = \cR(\A_1,\B_1) \land \cR(\A_2,\B_2)$, and $\cR(\B,\C) = \cR(\B_1,\C_1) \land \cR(\B_2,\C_2)$. By the induction hypothesis, we know  $\cR(\A_i,\C_i)\ge \cR(\A_i,\B_i)\land \cR(\B_i,\C_i)$, $i=1,2$. Then, by monotonicity of T-norms, we get $\cR(\A_1,\C_1)\land \cR(\A_2,\C_2)\ge \cR(\A_1,\B_1)\land \cR(\B_1,\C_1) \land \cR(\A_2,\B_2)\land \cR(\B_2,\C_2),$ from which, by associativity and commutativity of $\land$ and the definition of $\cR$, we get $\cR((\A_1\,\A_2),(\C_1\,\C_2))\ge \cR((\A_1\,\A_2),(\B_1\,\B_2))\land \cR((\B_1\,\B_2),(\C_1\,\C_2))$ for any T-norm $\land$.
    \item Let $\A=\lambda x.\A'$, $\B=\lambda y. \B'$, and $\C=\lambda z. \C'$, where $x, y, z$ have the same type. By definition of $\cR$, without loss of generality, we can choose a fresh variable $u$ of the same type such that $\cR(\lambda x. \A',\lambda z. \C') = \cR(\A'\{x\mapsto u\}, \C'\{z\mapsto u\})$,  $\cR(\lambda x. \A',\lambda y.\B')=\cR(\A'\{x\mapsto u\}, \B'\{y\mapsto u\})$, and $\cR(\lambda y. \B', \allowbreak \lambda z. \C')=\cR(\B'\{y\mapsto u\}, \C'\{z\mapsto u\})$. By the induction hypothesis, we have $\cR(\A'\{x\mapsto u\}, \C'\{z\mapsto u\})\ge \cR(\A'\{x\mapsto u\}, \B'\{y\mapsto u\}) \land \cR(\B'\{y\mapsto u\}, \C'\{z\mapsto u\})$ for any T-norm, which implies $\cR(\lambda x. \A',\lambda z. \C')\ge \cR(\lambda x. \A',\lambda y.\B')\land \cR(\lambda y. \B',\lambda z. \C')$ for any T-norm.  \qed
\end{itemize}
\end{proof}

Given a similarity relation $\cR$, cut value $\cut$, and two terms $\A$, $\B$, we write $\A\simeq_{\cR,\cut}\B$ if $\cR(\A, \B) \ge \cut$. The number $\cR(\A, \B)$ is called the \emph{$\cR$-similarity degree} of $\A$ and $\B$.

We say $\sigma$ is $(\cR,\cut)$-\emph{more general} than $\vartheta$ with variable restriction $V$, written $\sigma \preceq_{\cR,\cut}^V \vartheta$, if there exists a substitution $\tau$ such that $x\sigma\tau \simeq_{\cR,\cut} x\vartheta$ for all $x\in V$.

\begin{definition}[Unification problem, unifier, unification degree, mgu]
  An $(\cR,\cut)$-\emph{unification equation} between terms $\A$ and $\B$ is written as $\A\simeq_{\cR,\cut}^?\B$, where $\cR$ is a similarity relation and $\cut$ is a cut value. 
An $(\cR,\cut)$-\emph{unification problem} is a finite set of $(\cR,\cut)$-uni\-fi\-cation equations.  A substitution $\sigma$ is a \emph{unifier} (or a solution) of an $(\cR,\cut)$-unification problem $\{\A_1\simeq_{\cR,\cut}^?\B_1,\ldots,\A_1\simeq_{\cR,\cut}^?\B_1\}$ with \emph{unification degree} $\deg$ if $\cR(\A_1\sigma,\B_1\sigma) \wedge \cdots \wedge \cR(\A_n\sigma, \allowbreak \B_n\sigma)  =\deg \ge \cut$.

Given an $(\cR,\cut)$-unification problem $P$, we say that its $(\cR,\cut)$-unifier $\sigma$ is a \emph{most general $(\cR,\cut)$-unifier} of $P$ if for any $(\cR,\cut)$-unifier $\vartheta$ of $P$ we have $\sigma \precsim_{\cR,\cut}^{\fv(P)} \vartheta$.
\end{definition}
   
We consider a special case of this problem: all terms that appear in the unification problems and unifiers should be higher-order patterns. In the rest of the paper, we make the following assumptions (those related to terms follow \cite{DBLP:conf/lics/Nipkow93}):

\begin{itemize}
    \item $\min$: T-norm $\wedge$ is the minimum T-norm,
    \item $\alpha$: $\alpha$-equivalent terms are identified,
    \item $\beta$: terms are $\beta$-normalized by default,
    \item $\eta$: terms are in $\eta$-expanded form except for the arguments of free variables, which are in $\eta$-normal form, i.e., bound variables.
\end{itemize}

\begin{example}
    \label{exmp:unif1}
    Higher-order pattern $(\cR,\cut)$-unification problems may have most general unifiers that are or are not degree-maximal, and may have degree-maximal unifiers that are or are not most general. We illustrate it with an $(\cR,\cut)$-unification problem between 
    \begin{align*}
        & \A=\lambda x. \lambda y. f(F(x), F(y)) \text{ and }\\
        & \B=\lambda x. \lambda y. g(a(G(y,x)), b(G(x,y))),
    \end{align*}
    where $\cR$ is defined as $\cR(f,g)=0.8$, $\cR(a,b)=0.6$, $\cR(b,c)=\cR(a,c)=0.5$, and the cut value is $\cut=0.4$. Consider the following $(\cR,\cut)$-unifiers of $\A$ and $\B$ together with their unification degrees:
    \begin{align*}
        \sigma_1 = {} & \{ F\mapsto \lambda x. a(H(x)), G\mapsto \lambda y.\lambda x. H(x)\}, & & \deg_1=0.6. \\
        \sigma_2 = {} & \{ F\mapsto \lambda x. b(H(x)), G\mapsto \lambda y.\lambda x. H(x)\}, & & \deg_2=0.6.\\
        \sigma_3 = {} & \{ F\mapsto \lambda x. c(H(x)), G\mapsto \lambda y.\lambda x. H(x)\}, & & \deg_3=0.5.\\
        \sigma_4 = {} & \{ F\mapsto \lambda x. a(a(x)), G\mapsto \lambda y.\lambda x. a(x)\}, & & \deg_4=0.6.\\
        \sigma_5 = {} & \{ F\mapsto \lambda x. c(a(x)), G\mapsto \lambda y.\lambda x. a(x)\}, & & \deg_5=0.5.
    \end{align*}
    From these unifiers, $\sigma_1,\sigma_2$ and $\sigma_3$ are most general ones on $V=\{F,G\}$. (They are equivalent to each other modulo similarity.) Hence, we have two most general unifiers $\sigma_1,\sigma_2$ with the maximum degree, and one with a lower degree. The unifier $\sigma_4$ is strictly less general on $V$ than the first three, yet its unification degree is maximal. As for $\sigma_5$, it is neither most general nor degree-maximal unifier.

    Our goal is to design an algorithm that computes a degree-maximal most general unifier. Hence, for this problem, computing either $\sigma_1$ or $\sigma_2$ would be the desired outcome.
\end{example}

\section{Similarity-based unification of higher-order patterns}
\label{sect:sim:unif}

In this section, we formulate a similarity-based unification algorithm in a rule-based manner. The rules operate on triples 
$ P;\, \sigma;\, \deg$, referred to as unification \emph{configurations}, where $P$ is a similarity-based unification problem, 
$\sigma$ is the substitution computed so far, and $\deg$ is the approximation degree, also computed so far. The similarity relation $\cR$ and the cut value $\cut$ are implicit parameters, and it is assumed that $\deg\ge \cut$ in configurations.

The rules are given below. The symbol $\uplus$ stands for disjoint union. The rule \textsf{LF} calls an auxiliary function  $\varelimsu$ (which is also defined below) with the sides of the selected equation as arguments. 
\infrule{Abs}{Abstraction}
 {\{ \lambda x. \A \simeq_{\cR,\cut}^? \lambda x. \B \} \uplus P;\, \sigma;\deg \leadsto \{ \A \simeq_{\cR,\cut}^? \B \} \cup P;\,\sigma;\deg.}

\infrule{Dec}{Decomposition}
 {\{ f(\A_1,\ldots,\A_n)  \simeq_{\cR,\cut}^? g(\B_1,\ldots,\B_n)\} \uplus P;\,\sigma;\deg \leadsto \\  \{ \A_1  \simeq_{\cR,\cut}^? \B_1,\ldots, \A_n  \simeq_{\cR,\cut}^? \B_n\} \cup P;\, \sigma; \deg \wedge \cR(f,g),}
 [\noindent where $n\ge0$ and $(\deg \wedge \cR(f,g)) \ge \cut$. ]

\infrule{SV}{Same variables}
 {\{ F(x_1,\ldots, x_n)  \simeq_{\cR,\cut}^? F(y_1,\ldots,y_n)\} \uplus P;\,\sigma;\,\deg \leadsto  P\vartheta;\,\sigma\vartheta;\, \deg,}
 \noindent where $n\ge 0$ and 
 \begin{itemize}
     \item $\{z_1,\ldots,z_m\}=\{x_i \mid \allowbreak x_i = y_i, \allowbreak 1\le i \le n\}$, $m\ge 0$ (i.e., each $z$ is a variable that appears in the same position in both the sequences $x_1,\ldots, x_n$ and $y_1,\ldots,y_n$),
     \item $\vartheta=\{F\mapsto \lambda x_1,\ldots,x_n. H(z_1,\ldots,z_m)\}$ for a fresh variable $H$ of the appropriate type.
 \end{itemize}

\infrule{Ori}{Orient}
 {\{ a(\B_1,\ldots,\B_m)  \simeq_{\cR,\cut}^? F(x_1,\ldots, x_n)\} \uplus P;\,\sigma;\,\deg \leadsto \\
 \{ F(x_1,\ldots, x_n)  \simeq_{\cR,\cut}^? a(\B_1,\ldots,\B_m)\} \cup P;\,\sigma;\deg,}
 [\noindent where $n,m\ge 0$, $F$ is a free variable and $a$ is a constant or $a\in \{x_1,\ldots,x_n\}$.] 

\infrule{LF}{Left-flex}
{ \{F(x_1,\ldots, x_n)  \simeq_{\cR,\cut}^? a(\B_1,\ldots,\B_m) \} \uplus P;\,\sigma;\,\deg \leadsto 
 P\vartheta;\,\sigma\vartheta;\, \deg
 }
 \noindent where $n,m\ge 0$ and
 \begin{itemize}
     \item $F\notin \fv(a(\B_1,\ldots,\B_m))$,
     \item $a$ is a constant, free variable, or $a\in \{x_1,\ldots,x_n\}$, 
     \item $\varelimsu(F(x_1,\ldots, x_n), a(\B_1,\ldots,\B_m))= \varphi$, 
     \item $\vartheta = \varphi|_{V}$, where $V=\{F\} \cup \fv(a(\B_1,\ldots,\B_m))$.
 \end{itemize}

\infrule{Fail}{Failure}
 { \{\A  \simeq_{\cR,\cut}^? \B \} \uplus P;\,\sigma;\,\deg \leadsto 
 \bot,
 }
[\noindent if no other rule applies to the selected equation $\A  \simeq_{\cR,\cut}^? \B$.]  \vspace{3mm}

For our minimum T-norm, it would suffice to have $\cR(f,g) \ge \cut$ instead of $(\deg \wedge \cR(f,g)) \ge \cut$ in the condition of the {\sf Dec} rule. However, we decided to keep this more general requirement since it can be used with any T-norm. See also Section~\ref{sect:discussion} for related discussion.

Analyzing the rules, it is not hard to see that when the failure rule applies, we have one of the following three cases:
\begin{itemize}
    \item $\A$ and $\B$ have different types,
    \item $\A=f(\A_1,\ldots,\A_n)$, $\B=g(\B_1,\ldots,\B_n)$, and $(\deg \wedge \cR(f,g)) < \cut$, or
    \item $\A=F(x_1,\ldots, x_n)$,  $\B= a(\B_1,\ldots,\B_m)$, and $F\in \fv(a(\B_1,\ldots,\B_m))$.
\end{itemize}

In each of these cases, $\A  \simeq_{\cR,\cut}^? \B$ is unsolvable, which justifies the name of the rule.

In the {\sf LF} rule, if $a$ is a free variable, then $s_1,\ldots,s_m$ are distinct bound variables (since we work only with higher-order patterns). By this rule, in order to unify a flexible term $\A$ and a term $\B$, the auxiliary function $\varelimsu(\A, \B)$ is called, which creates an initial configuration $\{ \A \simeq^? \B\};\varepsilon$ and applies the \textsf{VE1} and \textsf{VE2} rules below as long as possible. If the process stops with the configuration $\emptyset;\varphi$, we say that $\varphi$ is the answer computed by $\varelimsu$ for $\A$ and $\B$ and write $\varelimsu(\A, \B)=\varphi$. Note that $\varphi$ might contain bindings for variables introduced in the process of $\varelimsu$, which appear neither in $\A$ nor in $\B$. Therefore, in $\vartheta$, we keep only those bindings from $\varphi$ that are relevant to $\A$ and $\B$. Note also that $\varelimsu$ is independent of $\cR$ and does not involve the degree computation.

\infrule{VE1}{Variable elimination 1}
 { \{ F(x_1,\ldots, x_n)  \simeq^? a(\B_1,\ldots,\B_m) \} \uplus P;\sigma \leadsto \\
 \{  H_1(x_1,\ldots,x_n) \simeq^? \B_1,\ldots, H_m(x_1,\ldots,x_n) \simeq^? \B_m \} \cup P;\sigma\vartheta,}\vspace{2mm}
 \noindent where $n,m\ge 0$ and
 \begin{itemize}
     \item $a$ is a constant or $a\in \{x_1,\ldots,x_n\}$, 
     \item $\vartheta=\{F\mapsto \lambda x_1,\ldots,x_n. a(H_1(x_1,\ldots,x_n), \ldots, H_m(x_1,\ldots,x_n))\}$ where the variables $H_1,\ldots,H_m$ are fresh  and have appropriate types.
     \item (Note that if $s_i$ for $1\le i\le m$, is of function type, the term $H_i(x_1,\ldots,x_n)$ must be $\eta$-expanded in both the new configuration and the substitution.)
 \end{itemize}

\infrule{VE2}{Variable elimination 2}
 { \{F(x_1,\ldots, x_n)  \simeq^? G(y_1,\ldots,y_m)\} \uplus P;\sigma; \leadsto 
 P\vartheta;\sigma\vartheta,} \vspace{2mm}
 \noindent where $n,m\ge 0$, and
 \begin{itemize}
     \item $\{x_1,\ldots,x_n\}\cap \{y_1,\ldots,y_m\}=\{z_1,\ldots, z_k\}$, 
     \item $\vartheta=\{F\mapsto \lambda x_1,\ldots,x_n. H(z_1,\ldots,z_k), \allowbreak G\mapsto \allowbreak \lambda y_1,\ldots,y_m. H(z_1,\ldots,z_k)\}$ with $H$ being a fresh variable of the appropriate type.
 \end{itemize}

Note that when $\varelimsu$ is invoked, it is always called with terms $\A$ and $\B$ where $\A$ has the form $F(x_1,\ldots, x_n)$ such that the variable $F$ does not occur in $\B$. Applying \textsf{VE1} or \textsf{VE2} to the initial configuration $\{F(x_1,\ldots, x_n) \simeq^? \B \};\varepsilon$ creates a new configuration in which $F$ does not appear anymore, and if the variables introduced instead of it appear on the left-hand sides, then they are unique again. Hence, the configurations $P;\sigma$ to which \textsf{VE1} and \textsf{VE2} apply satisfy the following invariant:

\begin{itemize}
    \item If an equation of the form $F(x_1,\ldots, x_n) \simeq^? \B$ appears in $P$, then $F$ is unique.
\end{itemize}

Consequently, in \textsf{VE1}, it does not make sense to apply the generated substitution to the remaining unification problem. That is why we have $P$ and not $P\vartheta$ on the right-hand side of \textsf{VE1}. On the other hand, we need $P\vartheta$ in \textsf{VE2}, because $P$ might contain $G$.

Given a similarity relation $\cR$, the cut value $\cut$, and two terms $\A$ and $\B$, to find an ${(\cR,\cut)}$-unifier of $\A$ and $\B$, the algorithm {{\hopsu}} creates the initial configuration $\{\A \simeq_{\cR,\cut}^? \B\}; \varepsilon; 1$ and applies the rules above as long as possible. It means that {\hopsu} can be defined as the following strategy where \textsf{nf} stands for normal form:
\begin{align*}
   & {\hopsu} := \textsf{nf}(\stepsu). & & {\stepsu} := \textsf{choice(Abs, Dec, SV, Ori, LF, Fail).}
\end{align*}

It is easy to see that for each selected equation, only one rule applies. Hence, {\stepsu} is deterministic in the sense that it transforms a given configuration $C$ in only one way. We refer to the result of transformation as ${\stepsu}(C)$. 

\begin{example}
\label{exmp:unif}
We illustrate the algorithm to solve an $(\cR,\cut)$-unification problem from Example~\ref{exmp:unif1}.
We create the initial configuration \[\{\lambda x. \lambda y. f(F(x), F(y))\simeq_{\mathcal{R}, 0.4}^? \lambda x. \lambda y. g(a(G(y,x)), b(G(x,y)))\};\varepsilon;1\] and apply the transformation rules as follows: 
\begin{align*}
   & \{\lambda x. \lambda y. f(F(x), F(y))\simeq_{\mathcal{R}, 0.4}^? \lambda x. \lambda y. g(a(G(y,x)), b(G(x,y)))\};\varepsilon;1 \leadsto_{\textsf{Abs}}\\ &\{ \lambda y. f(F(x), F(y))\simeq_{\mathcal{R}, 0.4}^? \lambda y. g(a(G(y,x)), b(G(x,y)))\};\varepsilon;1 \leadsto_{\textsf{Abs}}\\ &\{ f(F(x), F(y))\simeq_{\mathcal{R}, 0.4}^?  g(a(G(y,x)), b(G(x,y)))\};\varepsilon;1 \leadsto_{\textsf{Dec}}\\
  &\{ F(x) \simeq_{\mathcal{R}, 0.4}^? a(G(y,x)), F(y) \simeq_{\mathcal{R}, 0.4}^? b(G(x,y))\} ;\varepsilon;0.8 \leadsto_{\textsf{LF}}\\
  & \text{Application of \textsf{LF} requires steps of the $\varelimsu$:} \\
   & \qquad \{ F(x) \simeq^? a(G(y,x))\};\varepsilon \leadsto_{\textsf{VE1}} \\ 
   &  \qquad \{H_1(x) \simeq^? G(y,x)\}; \{ F\mapsto \lambda x. a(H_1(x))\} \leadsto_{\textsf{VE2}}\\
   &  \qquad \emptyset; \{ F\mapsto \lambda x. a(H_2(x)), G\mapsto \lambda y\lambda x. H_2(x), H_1\mapsto \lambda x.H_2(x)\} \\
   & \text{$\varelimsu$ returns $\{ F\mapsto \lambda x. a(H_2(x)), G\mapsto \lambda y\lambda x. H_2(x)\}$ and we continue:}\\
   &\{ a(H_2(y)) \simeq_{\mathcal{R}, 0.4}^? b(H_2(y))\} ; \\
   & \qquad \{ F\mapsto \lambda x. a(H_2(x)), G\mapsto \lambda y\lambda x. H_2(x)\};0.8 \leadsto_{\textsf{Dec}}\\
   &\{ H_2(y) \simeq_{\mathcal{R}, 0.4}^? H_2(y)\} ;\{ F\mapsto \lambda x. a(H_2(x)), G\mapsto \lambda y\lambda x. H_2(x)\};0.6\leadsto_{\textsf{SV}} \\
   & \emptyset ;\{ F\mapsto \lambda x. a(H(x)), G\mapsto \lambda y\lambda x. H(x), H_2\mapsto \lambda x.H(x)\};0.6
\end{align*}
Hence, we obtained the substitution $\{ F\mapsto \lambda x. a(H(x)), G\mapsto \lambda y\lambda x. H(x), H_2\mapsto \lambda x.H(x)\}$ and the degree $0.6$. It is easy to check that the substitution is indeed a unifier of the given terms with degree $0.6$. Note also that not all the bindings from the substitution are relevant: we can restrict it only to the free variables of the given terms, getting $\{ F\mapsto \lambda x. a(H(x)), G\mapsto \lambda y\lambda x. H(x)\}$, which corresponds to $\sigma_1$ from Example~\ref{exmp:unif1}. If we applied the {\sf LF} rule to the second equation $F(y) \simeq_{\mathcal{R}, 0.4}^? b(G(x,y))$ instead of the first one, we would get the unifier $\sigma_2$ from Example~\ref{exmp:unif1}.

If we increase the cut value to, say, $0.7$, the problem does not have a solution and the algorithm reaches a configuration to which the failure rule applies:
\[\{ a(H_2(y)) \simeq_{\mathcal{R}, 0.7}^? b(H_2(y))\} ;\{ F\mapsto \lambda x. a(H_2(x)), G\mapsto \lambda y\lambda x. H_2(x)\};0.8 \leadsto_{\textsf{Fail}} \bot.\]

When $\cut=1$, the problem is a crisp (i.e., standard) higher-order pattern unification problem, which obviously is not solvable due to the mismatch between $f$ and $g$. The algorithm detects it, stopping with failure after the application of the abstraction steps.
\end{example}

\begin{theorem}[Termination of {\hopsu}]
    \label{thm:termination}
    {\hopsu} terminates for any input either with $\emptyset;\sigma; \deg$ or with $\bot$. 
\end{theorem}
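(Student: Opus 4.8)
The plan is to exhibit a well-founded complexity measure on configurations that strictly decreases with each application of $\stepsu$, together with a separate termination argument for the auxiliary subalgorithm $\varelimsu$ invoked by the $\textsf{LF}$ rule. Since $\stepsu$ is deterministic (only one rule applies to a given selected equation) and the $\textsf{Fail}$ rule produces $\bot$ in one step, the only way termination could fail is through an infinite sequence of $\textsf{Abs}$, $\textsf{Dec}$, $\textsf{SV}$, $\textsf{Ori}$, and $\textsf{LF}$ steps. I will rule this out by a lexicographic measure and treat $\textsf{LF}$'s internal behaviour as a self-contained lemma.

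First I would handle $\varelimsu$ in isolation. It starts from $\{F(x_1,\ldots,x_n)\simeq^? \B\};\varepsilon$ where $F\notin\fv(\B)$, and applies only $\textsf{VE1}$ and $\textsf{VE2}$. The natural measure here is the total $\size$ of the right-hand sides of the equations in $P$ (or, equivalently, the multiset of these sizes ordered by the multiset extension of $>$ on $\mathbb{N}$). The $\textsf{VE1}$ rule replaces an equation with right-hand side $a(\B_1,\ldots,\B_m)$ by the $m$ equations with right-hand sides $\B_1,\ldots,\B_m$, and since $\size(a(\B_1,\ldots,\B_m)) = 1 + \sum_i \size(\B_i) > \sum_i \size(\B_i)$, the multiset measure strictly decreases (when $m=0$ the equation is simply removed). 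The $\textsf{VE2}$ rule removes one equation outright and, thanks to the invariant that each flexible left-hand side variable is unique, the instantiation $\vartheta$ does not reintroduce equations of larger size into $P\vartheta$. This establishes that $\varelimsu$ terminates, so each $\textsf{LF}$ step is well defined and consumes finite time.

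For the main loop, I would use the lexicographically ordered triple
\[
\bigl(\,\#\{\text{free variables occurring in } P\},\ \ \textstyle\sum_{e\in P}\size(e),\ \ \#\{\text{equations in } P \text{ with flexible LHS and rigid RHS}\}\,\bigr),
\]
ordered by the lexicographic extension of the usual order on $\mathbb{N}$. The key observations are: $\textsf{SV}$ and $\textsf{LF}$ eliminate a free variable (via $\vartheta$) without introducing equations over genuinely new variables in a way that increases the first component — here one must check carefully that the fresh variables $H$, $H_i$ introduced by $\varelimsu$ and by $\textsf{SV}$ either do not persist in $\vartheta|_V$ or count as a net decrease, so the first component strictly drops; $\textsf{Abs}$ and $\textsf{Dec}$ leave the first component unchanged but strictly decrease the second (removing a $\lambda$ or a head symbol); and $\textsf{Ori}$ leaves the first two components unchanged while strictly decreasing the third, since it turns a rigid-flex orientation into a flex-rigid one to which $\textsf{Ori}$ can no longer apply. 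The degree component $\deg$ need not enter the measure, as it only ever decreases or stays fixed and is irrelevant to termination.

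The main obstacle I anticipate is the bookkeeping of \emph{free variables} in the first component of the measure across $\textsf{LF}$ and $\textsf{SV}$ steps. Both rules introduce fresh variables ($H$, or the $H_i$ inside $\varelimsu$), so one must argue that the restriction $\vartheta = \varphi|_V$ and the uniqueness invariant together guarantee that the net count of distinct free variables appearing in the resulting problem strictly decreases — i.e., that eliminating $F$ genuinely outweighs any freshly introduced variables that survive in $P\vartheta$. I would make this precise by arguing that the fresh variables introduced by $\varelimsu$ are projected away by $|_V$ except where they replace $F$ in a strictly size-reduced context, so they cannot cause the free-variable count to grow. Once this accounting is nailed down, each $\stepsu$ step strictly decreases the triple, the order is well-founded, and hence $\textsf{nf}(\stepsu)$ must terminate, reaching either $\emptyset;\sigma;\deg$ or $\bot$.
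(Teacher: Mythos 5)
Your strategy coincides with the paper's own proof: termination of $\varelimsu$ is established separately via a multiset-of-sizes measure that decreases under \textsf{VE1} and \textsf{VE2}, and the main loop is handled by a lexicographic triple whose components are (essentially) the number of distinct variables in $P$, the sizes of the equations in $P$, and the number of rigid-flex equations. The $\varelimsu$ part and your classification of \textsf{Abs}, \textsf{Dec}, \textsf{Ori}, and \textsf{LF} are sound.

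There is, however, one wrong step: the claim that \textsf{SV} strictly decreases the first component. It does not when $F$ occurs in the remaining problem $P$: the substitution $\vartheta = \{F \mapsto \lambda x_1,\ldots,x_n.\allowbreak H(z_1,\ldots,z_m)\}$ trades $F$ for the fresh variable $H$ one-for-one, so $\fv(P\vartheta) = (\fv(P)\setminus\{F\}) \cup \{H\}$ has exactly the same cardinality as $\fv(P)$, which equals $\fv(P_0)$ in this case. (Contrast this with \textsf{LF}, where the strict drop does hold: there $\vartheta$ eliminates all of $V = \{F\} \cup \fv(a(\B_1,\ldots,\B_m))$, while the fresh variables surviving the restriction $\varphi|_{V}$ number at most $|\fv(a(\B_1,\ldots,\B_m))| = |V| - 1$, one head variable per eliminated free variable of the right-hand side.) The repair is exactly the bookkeeping the paper uses: \textsf{SV} leaves the first component non-increasing and strictly decreases the \emph{second}, since the selected equation is deleted and applying $\vartheta$ cannot enlarge any remaining equation --- each subterm $F(w_1,\ldots,w_n)$, of size $n+1$, is replaced by an $H$-term of size $m+1 \le n+1$. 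With \textsf{SV} reclassified alongside \textsf{Abs} and \textsf{Dec} rather than alongside \textsf{LF}, your lexicographic measure does strictly decrease at every step, and the rest of your argument goes through unchanged.
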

\begin{proof}
We first establish that $\varelimsu(\A,\B)$ always terminates. This procedure starts by constructing the initial configuration $\{ \A \simeq_{\cR,\cut}^? \B\};\varepsilon$ and then repeatedly applies the \textsf{VE1} and \textsf{VE2} rules. To each configuration $P;\sigma$ we associate a complexity measure as the multiset of the sizes of equations in $P$, denoted by $\mathit{ms}(P):=\mset{\size(\A) + \size(\B) \mid \A \simeq_{\cR,\cut}^? \B \in P }.$  To order the measures, we use the multiset extension of the standard ordering on natural numbers (Dershowitz-Manna ordering~\cite{DBLP:journals/cacm/DershowitzM79}).
Both \textsf{VE1} and \textsf{VE2} rules strictly decrease this measure, ensuring that the reduction process cannot continue indefinitely. Consequently, $\varelimsu(\A, \B)$ must terminate.

Now we show termination of {\hopsu}. For this, we define a complexity measure $\cm(C)$ for a configuration $C=(P;\sigma;\deg)$, and show that $\cm(C)>\cm(\stepsu(C))$ holds. The measure is the triple $\cm(C)=\langle N_1, N_2, N_3 \rangle$ defined as follows:

\begin{itemize}
 \item $N_1$  is the number of distinct variables in $P$,
 \item $N_2 := \mathit{ms}(P)$,  
 \item $N_3$ is the number of equations in $P$ with a rigid left-hand side and a flexible right-hand side (rigid-flex equations).
\end{itemize}

We treat $\bot$ as a special configuration and define its measure as $\cm(\bot)=\langle 0,\emptyset,0\rangle$. Measures are compared lexicographically. For $N_2$, we use the Dershowitz-Manna ordering.
The table below shows which rule reduces which component of the measure. 
\begin{center}
 \begin{tabular}{|l|c|c|c|}
  \hline
  Rule &  $N_1$ & $N_2$ & $N_3$ \\
  \hline \hline
  {\sf LF} & $>$ & & \\
  {\sf Abs,\, Dec,\, SV, Fail } & $\ge$ & $>$ & \\
  {\sf Ori} & $\ge$ & $\ge$ & $>$ \\ \hline
 \end{tabular}
\end{center}
Hence, each rule strictly reduces $\cm(C)$. Since the ordering is well-founded, it implies the termination of the algorithm. It is obvious that there are only two alternatives for the final configuration: either the first component is empty, or it is $\bot$. It finishes the proof. \qed
\end{proof}

When {\hopsu} stops with $\emptyset;\sigma; \deg$, we say that {\hopsu} \emph{succeeds} with the \emph{computed answer}  $(\sigma,\deg)$ (or computes $(\sigma,\deg)$). If it stops with $\bot$, we say that {\hopsu} \emph{fails}. For a similarity relation $\cR$, a cut value $\cut$, and two terms $\A$ and $\B$, if {\hopsu} computes $(\sigma,\deg)$, we write ${\hopsu}(\A,\B, \cR,\cut)=(\sigma,\deg)$. 

\begin{lemma}
\label{lem:soundness:stepsu}
   Let $\cR$ be a similarity relation, $\cut$ a cut value and $P;\sigma;\deg$ a configuration. If ${\stepsu}(P;\sigma;\deg)=(P';\sigma\vartheta;\deg\wedge\deg')$ and $\tau$ is an $(\cR,\cut)$-unifier of $P'$ with degree $\deg''$, then $\vartheta\tau$ is an $(\cR,\cut)$-unifier of $P$ with degree  $\deg'\wedge\deg''$. 
\end{lemma}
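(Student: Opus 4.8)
The plan is to prove this soundness lemma by case analysis on which rule of \stepsu\ is applied, since \stepsu\ is deterministic and transforms the configuration in exactly one way depending on the selected equation. For each rule I would verify that, given a unifier $\tau$ of the resulting problem $P'$ with degree $\deg''$, the composition $\vartheta\tau$ solves the original problem $P$ with degree $\deg'\wedge\deg''$, where $\vartheta$ is the substitution the rule appends and $\deg'$ is the degree contribution of that rule (so $\deg'=\cR(f,g)$ for \textsf{Dec} and $\deg'=1$ for the other rules, which do not lower the degree). The key structural observation throughout is that the selected equation splits $P$ as $\{\A\simeq_{\cR,\cut}^?\B\}\uplus P_0$, so I must show both that $\vartheta\tau$ unifies the new equations that replace $\A\simeq^?\B$ with the right degree, and that it unifies $P_0\vartheta$ (when the rule applies $\vartheta$ to the rest) or $P_0$ directly.

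The paragraphs I would organize are the following. For \textsf{Abs}, I use the congruence clause $\cR(\lambda x.\A,\lambda x.\B)=\cR(\A,\B)$ directly, so a unifier of the decomposed equation is a unifier of the original with identical degree and $\vartheta=\varepsilon$. For \textsf{Dec}, I invoke the application clause $\cR((\A_1\,\A_2),(\A_1'\,\A_2'))=\cR(\A_1,\A_1')\wedge\cR(\A_2,\A_2')$ iterated over the $n$ arguments, together with the alphabet value $\cR(f,g)$, so that the overall degree of the head equation under $\tau$ equals $\cR(f,g)\wedge\bigwedge_i\cR(\A_i\tau,\B_i\tau)=\deg'\wedge\deg''$ after combining with the degree on $P_0$; here associativity, commutativity, and idempotence of $\min$ are what let me regroup cleanly. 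For \textsf{Ori}, the rule only swaps the two sides of an equation, and since $\cR$ is symmetric the degree is unchanged and $\vartheta=\varepsilon$. For \textsf{SV}, I check that the binding $\vartheta=\{F\mapsto\lambda x_1,\dots,x_n.H(z_1,\dots,z_m)\}$ makes $F(x_1,\dots,x_n)\vartheta$ and $F(y_1,\dots,y_n)\vartheta$ reduce to $H(z_1,\dots,z_m)$ (using that the $z_i$ are exactly the positions where the two argument lists agree), so that equation becomes a reflexive instance of degree $1$; and I confirm $\vartheta\tau$ still solves the rest $P_0\vartheta$ with degree $\deg''$.

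The main obstacle will be the \textsf{LF} case, because there $\vartheta$ is not read off the rule directly but produced by the auxiliary procedure \varelimsu, and only after restricting $\varphi$ to the relevant variables $V=\{F\}\cup\fv(a(\B_1,\dots,\B_m))$. Here I would need a companion fact that the answer $\varphi=\varelimsu(F(x_1,\dots,x_n),a(\B_1,\dots,\B_m))$ is a \emph{syntactic} (degree-$1$) unifier of $F(x_1,\dots,x_n)\simeq^? a(\B_1,\dots,\B_m)$, i.e.\ that $F(x_1,\dots,x_n)\varphi$ and $a(\B_1,\dots,\B_m)\varphi$ are equal modulo $\alpha\beta\eta$. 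This should follow from a separate soundness argument for the \textsf{VE1}/\textsf{VE2} rules (analogous in spirit to the present lemma but simpler, since no degrees are involved and \varelimsu\ is independent of $\cR$), which I would either state as an auxiliary lemma or fold in. The delicate point is that $\varphi$ may bind auxiliary variables absent from $\A$ and $\B$, so I must argue that restricting to $\vartheta=\varphi|_V$ does not change the effect on the terms of $P$ built over $V$; because \varelimsu\ is called under the invariant that $F$ does not occur in $a(\B_1,\dots,\B_m)$ and $F$ is unique, the fresh variables introduced are new and the restriction is faithful. Once $\vartheta$ is known to be a degree-$1$ unifier of the selected equation, combining it with a unifier $\tau$ of $P'=P_0\vartheta$ and applying the fuzzy transitivity of $\cR$ (Theorem on similarity of $\cR$) together with the pattern-stability proposition $\cR(\A,\B)=\cR(\A\sigma,\B\sigma)$ gives the claimed degree $\deg'\wedge\deg''=1\wedge\deg''=\deg''$ on that equation, and $\deg''$ on the rest, yielding the result.
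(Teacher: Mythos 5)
Your proposal follows essentially the same route as the paper's proof: a case analysis on the rules of {\stepsu}, where \textsf{Abs}, \textsf{Ori}, \textsf{Dec}, and \textsf{SV} are handled by direct computation with the minimum T-norm, and the crucial \textsf{LF} case is reduced to the fact that $\varelimsu$ (via the crisp rules \textsf{VE1}/\textsf{VE2}, whose soundness the paper discharges by appeal to standard higher-order pattern unification rather than reproving it) yields a degree-$1$ unifier of the selected flex--rigid equation, so that composing with $\tau$ leaves the overall degree at $1\wedge\deg''=\deg''$. Your extra care about the restriction $\vartheta=\varphi|_{V}$ being faithful is a welcome detail the paper only states informally, while your appeal to $\wedge$-transitivity in the \textsf{LF} case is superfluous, since preservation of syntactic ($\alpha\beta\eta$) equality under substitution already gives the degree-$1$ claim.
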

\begin{proof}
First, we prove the soundness of $\varelimsu$. This will be needed to show the soundness of the \textsf{LF} rule.
Note that the rules \textsf{VE1} and \textsf{VE2} are the standard variable elimination rules used in the crisp higher-order pattern unification~\cite{DBLP:conf/lics/Nipkow93,DBLP:journals/logcom/Miller91} under the condition that the free variables in the left-hand side of the equations are unique in the problem. Then their soundness follows from the soundness of the corresponding rules in the standard higher-order pattern unification.

Further, we proceed by case distinction on the inference rules of \hopsu.
We illustrate here three cases when the selected rules are \textsf{Dec}, \textsf{SV}, and \textsf{LF}. For the other rules, the lemma can be shown similarly.

\begin{itemize}
    \item {\stepsu} transforms the given configuration by the \textsf{Dec} rule, i.e., we have
    \begin{align*}
       & \{ f(\A_1,\ldots,\A_n)  \simeq_{\cR,\cut}^? g(\B_1,\ldots,\B_n)\} \uplus P;\,\sigma;\,\deg \leadsto_{\textsf{Dec-SU}} \\
       & \qquad \{ \A_1  \simeq_{\cR,\cut}^? \B_1,\ldots, \A_n  \simeq_{\cR,\cut}^? \B_n\} \cup P;\, \sigma;\, \deg \wedge \cR(f,g). 
    \end{align*}
    Then $\vartheta=\varepsilon$ and $\deg'=\cR(f,g) \ge \cut$.
    Let $\tau$ be an $(\cR,\cut)$-unifier of the problem $\{ \A_1  \simeq_{\cR,\cut}^? \B_1,\allowbreak \ldots, \A_n  \simeq_{\cR,\cut}^? \B_n\} \cup P$  with the degree $\deg''$. This means, 
    \begin{align*}
        & \deg''=\cR(\A_1\tau, \B_1\tau) \land \cdots \land \cR(\A_n\tau, \B_n\tau)\land  d \ge \mu,
    \end{align*}
    where $d=\bigwedge_{(\A \simeq_{\cR,\cut}^? \B) \in P} \cR(\A\tau,\B\tau)$. Since $\deg'\ge \cut$ and $\deg''\ge \cut$, we have $\deg' \land \deg''\ge \cut$. Moreover, 
    \begin{align*}
         \deg' \land \deg'' = {} & \deg' \land \cR(\A_1\tau, \B_1\tau) \land \cdots \land \cR(\A_n\tau, \B_n\tau)\land  d \\
       = {} & \cR(f,g) \land \cR(\A_1\tau, \B_1\tau) \land \cdots \land \cR(\A_n\tau, \B_n\tau)\land  d \\
       = {} & \cR(f(\A_1,\ldots, \A_n)\tau, g(\B_1,\ldots, \B_n)\tau) \land d,
    \end{align*}
     which implies that $\tau = \vartheta\tau $ is an $(\cR,\cut)$-unifier of the unification problem $\{ f(\A_1,\ldots,\A_n)  \simeq_{\cR,\cut}^? g(\B_1,\ldots,\B_n)\} \uplus P$ with degree $\deg' \land \deg''$.

    \item {\stepsu} transforms the given configuration by the \textsf{SV} rule, i.e., we have
    \begin{align*}
       & \{ F(x_1,\ldots, x_n)  \simeq_{\cR,\cut}^? F(y_1,\ldots,y_n)\} \uplus P;\,\sigma;\,\deg \leadsto_{\textsf{SV}}  P\vartheta;\, \sigma\vartheta;\, \deg . 
    \end{align*}

   where $\vartheta=\{F\mapsto \lambda x_1,\ldots,x_n. H(z_1,\ldots,z_m)\}$,  $\{z_1,\ldots,z_m\}=\{x_i \mid  x_i = y_i,  1\le i \le n\}$. We also have $\deg'=1$. Let $\tau$ be an $(\cR,\cut)$-unifier of $P\vartheta$  with the degree $\deg''$. This means that $\deg''=\bigwedge_{(\A\vartheta \simeq_{\cR,\cut}^? \B\vartheta) \in P\vartheta}\cR(\A\vartheta\tau,\B\vartheta\tau)\ge\cut$. 
   Then $\vartheta\tau$ is also unifier of $\{ F(x_1,\ldots, x_n)  \simeq_{\cR,\cut}^? F(y_1,\ldots,y_n)\} \uplus P$ with degree $\deg''$. Indeed, we have
   \begin{align*}
       & \cR(F(x_1,\ldots, x_n)\vartheta\tau, F(y_1,\ldots,y_n)\vartheta\tau) \wedge \bigwedge_{(\A \simeq_{\cR,\cut}^? \B) \in P} \cR(\A\vartheta\tau,\B\vartheta\tau)  \\
     ={}  & \cR(H(z_1,\ldots, z_m)\tau, H(z_1,\ldots,z_m)\tau) \wedge \bigwedge_{(\A\vartheta \simeq_{\cR,\cut}^? \B\vartheta) \in P\vartheta} \cR(\A\vartheta\tau,\B\vartheta\tau) \\
     = {} & 1\wedge \deg'' = \deg''.
   \end{align*}

    \item {\stepsu} transforms the given configuration by the \textsf{LF} rule, i.e., we have
 \begin{align*}
       & \{ F(x_1,\ldots, x_n)  \simeq_{\cR,\cut}^? a(\B_1,\ldots,\B_m)\} \uplus P;\,\sigma;\,\deg \leadsto_{\textsf{LF}}  P\vartheta;\, \sigma\vartheta;\, \deg, 
    \end{align*}

where $\vartheta$ is an $(\cR,\cut)$-unifier of $F(x_1,\ldots, x_n)  \simeq_{\cR,\cut}^? a(\B_1,\ldots,\B_m)$ with degree $\deg'=1$. Let $\tau$ be an $(\cR,\cut)$-unifier of $P\vartheta$  with degree $\deg''$. Hence, $\deg''=\bigwedge_{(\A\vartheta \simeq_{\cR,\cut}^? \B\vartheta) \in P\vartheta}\cR(\A\vartheta\tau,\B\vartheta\tau)\ge\cut$. Then, since $\vartheta$ is an $(\cR,\cut)$-unifier of the problem $F(x_1,\ldots, x_n)  \simeq_{\cR,\cut}^? a(\B_1,\ldots,\B_m)$ with degree $\deg'$, so is $\vartheta\tau$. Hence, we get
   \begin{align*}
       & \cR(F(x_1,\ldots, x_n)\vartheta\tau, a(\B_1,\ldots,\B_m)\vartheta\tau) \wedge \bigwedge_{(\A \simeq_{\cR,\cut}^? \B) \in P} \cR(\A\vartheta\tau,\B\vartheta\tau)  \\
    ={}   & \deg' \wedge \bigwedge_{(\A\vartheta \simeq_{\cR,\cut}^? \B\vartheta) \in P\vartheta} \cR(\A\vartheta\tau,\B\vartheta\tau) = 1 \wedge \deg'' = \deg'',
   \end{align*}
which means that the substitution $\vartheta\tau$ is an $(\cR,\cut)$-unifier of the unification problem $ \{ F(x_1,\ldots, x_n)  \simeq_{\cR,\cut}^? a(\B_1,\ldots,\B_m)\} \uplus P$ with degree $\deg''$. \qed
\end{itemize}
\end{proof}

\begin{theorem}[Soundness of \hopsu]
\label{thm:soundness:hopsu}
    Given a similarity relation $\cR$, a cut value $\cut$, and two terms $\A$ and $\B$, if ${\hopsu}(\A,\B, \cR,\cut)=(\sigma,\deg)$, then $\deg\geq \cut$ and $\cR(\A\sigma,\B\sigma)=\deg$, i.e., $\sigma$ is an $(\cR,\cut)$-unifier of $\A$ and $\B$ with degree $\deg$.
\end{theorem}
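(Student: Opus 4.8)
The plan is to prove the soundness of {\hopsu} by reducing it to the single-step soundness already established in Lemma~\ref{lem:soundness:stepsu}, and then lifting that to the full derivation by induction on the number of {\stepsu} applications. Since {\hopsu} succeeds on $\A$ and $\B$, by Theorem~\ref{thm:termination} there is a finite derivation
\[
\{\A\simeq_{\cR,\cut}^?\B\};\varepsilon;1 \leadsto C_1 \leadsto C_2 \leadsto \cdots \leadsto C_k = \emptyset;\sigma;\deg,
\]
where each step is one application of {\stepsu} and the final configuration has an empty problem set. The key observation is that the empty problem $\emptyset$ is trivially $(\cR,\cut)$-unified by the identity substitution $\varepsilon$ with degree $1$ (the empty conjunction of similarity degrees equals the unit $1$ of the T-norm), and $1\ge\cut$.

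First I would formalize the induction. I proceed by induction on the length of the derivation from a configuration $C_i=(P_i;\sigma_i;\deg_i)$ to the terminal configuration $\emptyset;\sigma;\deg$, proving the following backward-propagating invariant: the ``tail substitution'' that transports $\sigma_i$ to $\sigma$ is an $(\cR,\cut)$-unifier of $P_i$ with a degree $\deg_i'$ such that $\deg_i\wedge\deg_i'=\deg$. Concretely, writing $\sigma=\sigma_i\vartheta_i$ for the accumulated tail substitution $\vartheta_i$, I claim $\vartheta_i$ is an $(\cR,\cut)$-unifier of $P_i$ with degree $\deg_i'$ where $\deg=\deg_i\wedge\deg_i'$. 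The base case is $C_k=\emptyset;\sigma;\deg$: here $P_k=\emptyset$, $\vartheta_k=\varepsilon$, and $\varepsilon$ unifies $\emptyset$ with degree $1$, matching $\deg=\deg\wedge 1$. For the inductive step, suppose the claim holds for $C_{i+1}$; since $C_i\leadsto C_{i+1}$ is a single {\stepsu} application, Lemma~\ref{lem:soundness:stepsu} applied to this step (with the tail unifier of $C_{i+1}$ supplied by the induction hypothesis in the role of $\tau$) yields exactly that the tail unifier of $C_i$ is an $(\cR,\cut)$-unifier of $P_i$ with the conjoined degree, and that this degree is $\ge\cut$. Care must be taken to track how the incremental degree $\deg'$ produced at each step composes: Lemma~\ref{lem:soundness:stepsu} states that if the step produces degree $\deg_i\wedge\deg'$ and the remaining unifier has degree $\deg''$, then the combined unifier has degree $\deg'\wedge\deg''$; chaining these across the whole derivation gives that $\sigma$ unifies $\{\A\simeq_{\cR,\cut}^?\B\}$ with the total degree, which by construction of the degree component equals $\deg$.

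Specializing the invariant to the initial configuration $C_0=\{\A\simeq_{\cR,\cut}^?\B\};\varepsilon;1$ gives that $\sigma=\varepsilon\,\sigma=\sigma$ is an $(\cR,\cut)$-unifier of $\{\A\simeq_{\cR,\cut}^?\B\}$ with degree $\deg_0'$ satisfying $\deg=1\wedge\deg_0'=\deg_0'$. Hence $\cR(\A\sigma,\B\sigma)=\deg$, and since every configuration along the derivation maintains $\deg_i\ge\cut$ (this is an explicit invariant of the rules, enforced by the side conditions of {\sf Dec} and the precondition that $\deg\ge\cut$ in all configurations), in particular the final degree satisfies $\deg\ge\cut$. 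This establishes both conclusions of the theorem.

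The main obstacle I anticipate is bookkeeping the degree composition cleanly across the induction, because the degree is accumulated \emph{forward} (multiplied in at each {\stepsu} step via $\deg\wedge\cR(f,g)$ in {\sf Dec}, unchanged in the flexible rules) while the unifier soundness in Lemma~\ref{lem:soundness:stepsu} is phrased \emph{backward} (the step degree $\deg'$ conjoined with the downstream unifier degree $\deg''$). Reconciling these two directions requires a precise statement of what $\deg_i'$ denotes and a verification that $\bigwedge_{i}\deg_i'$ telescopes to the final $\deg$; the minimum T-norm's idempotence and associativity make this routine but it must be stated carefully. A secondary subtlety is the {\sf LF} step, where the degree is unchanged ($\deg'=1$) but $\sigma$ is updated by a restricted substitution $\vartheta=\varphi|_V$; I would rely on the soundness of $\varelimsu$ (already proved inside Lemma~\ref{lem:soundness:stepsu}) and on the fact that the discarded bindings of $\varphi$ concern only fresh auxiliary variables not occurring in $\A$, $\B$, or the remaining problem, so restricting to $V$ does not affect the unification degree on the relevant equations.
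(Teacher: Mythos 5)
Your proposal is correct and follows essentially the same route as the paper's own proof: the paper also unfolds the derivation $P_0;\sigma_0;\deg_0 \leadsto^* P_n;\sigma;\deg$ produced by termination and then appeals to induction on its length with Lemma~\ref{lem:soundness:stepsu}, which is exactly your backward-propagating invariant spelled out in detail. Your version merely makes explicit the bookkeeping (tail substitutions, telescoping of the $\deg'_i$ via associativity of $\wedge$) that the paper leaves implicit.
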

\begin{proof}
    From ${\hopsu}(\A,\B, \cR,\cut)=(\sigma,\deg)$ we have the derivation
    \[P_0;\,\sigma_0;\,\deg_0  \leadsto P_1;\,\sigma_0\vartheta_1;\,\deg_0 \wedge \deg_1 \leadsto^* P_n;\, \sigma_0\vartheta_1\cdots\vartheta_n;\, \deg_0\wedge \deg_1 \wedge \cdots \wedge \deg_n,  \]
    where 
    \begin{align*}
        & P_0 = \{ \A \simeq_{\cR,\cut}^? \B\},\quad  \sigma_0=\varepsilon,\quad \deg_0=1,\\
        & P_n=\emptyset, \quad \sigma=\sigma_0\vartheta_1\cdots\vartheta_n = \vartheta_1\cdots\vartheta_n, \\  
        & \deg = \deg_0\wedge \deg_1 \wedge \cdots \wedge \deg_n=  \deg_1 \wedge \cdots \wedge \deg_n.
    \end{align*}
   Therefore, we get the desired result using induction on the length of the derivation with Lemma~\ref{lem:soundness:stepsu}. \qed
\end{proof}

\begin{lemma}
\label{lem:completeness:stepsu} Let $\cR$ be a similarity relation, $\cut$ a cut value and $P_0;\sigma_0;\deg_0$ a configuration. Assume $\tau$ is an $(\cR,\cut)$-unifier of $P_0$ with degree $\deg_\tau \ge \cut $ and $\sigma_0 \preceq_{\cR,\cut}^{\dom(\tau)} \tau$. 
Then we can make a step ${\stepsu}(P_0;\sigma_0;\deg_0)=(P_1;\sigma_1;\deg_1)$ such that 

\begin{itemize}
    \item $\deg_1=\deg_0\wedge \deg'_1$ for some $\deg'_1$,
    \item there exists a substitution $\varphi_1$ such that
      \begin{itemize}
          \item $\dom(\varphi_1)=\fv(P_1)\setminus \fv(P_0)$ (the set of new free variables in $P_1$),
          \item $\fv(\ran(\varphi_1)) \cap \dom(\tau)=\emptyset$,
          \item $\varphi_1\tau$ is an $(\cR,\cut)$-unifier of $P_1$ with degree $\deg_{\varphi_1\tau} \ge \cut$ such that $\deg_\tau \le \deg'_1 \wedge \deg_{\varphi_1\tau}$, and
          \item $\sigma_1 \precsim_{\cR,\cut}^{\dom(\varphi_1\tau)} \varphi_1\tau$.
      \end{itemize}
\end{itemize}
\end{lemma}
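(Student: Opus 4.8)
The plan is to argue by case analysis on the unique rule that $\stepsu$ applies to the selected equation of $P_0$, constructing in each case the factor $\varphi_1$ and a witnessing substitution for the generality claim $\sigma_1\precsim_{\cR,\cut}^{\dom(\varphi_1\tau)}\varphi_1\tau$. I would first observe that \textsf{Fail} cannot fire: each of its three diagnostic situations was already shown to be unsolvable, while $\tau$ is a unifier of degree $\deg_\tau\ge\cut>0$; in particular, for a rigid pair with heads $f,g$ the identity $\cR(f(\dots)\tau,g(\dots)\tau)=\cR(f,g)\wedge\bigwedge_i\cR(\A_i\tau,\B_i\tau)$ forces $\cR(f,g)\ge\deg_\tau\ge\cut$, so together with the invariant $\deg_0\ge\cut$ the side condition of \textsf{Dec} holds. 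Hence one of \textsf{Abs}, \textsf{Dec}, \textsf{SV}, \textsf{Ori}, \textsf{LF} applies, and $\deg_1=\deg_0\wedge\deg'_1$ with $\deg'_1$ read off the rule ($\deg'_1=\cR(f,g)$ for \textsf{Dec}, and $\deg'_1=1$ otherwise). Throughout I would assume, without loss of generality, that $\tau$ is idempotent, so that $\fv(\ran(\tau))\cap\dom(\tau)=\emptyset$.

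For the three ``shape'' rules \textsf{Abs}, \textsf{Ori} and \textsf{Dec} I would take $\varphi_1=\varepsilon$, whence $\varphi_1\tau=\tau$, $\dom(\varphi_1)=\emptyset$, and all side conditions on $\varphi_1$ hold trivially; no new variables are introduced. The degree bound and the generality claim then reduce to the defining clauses of $\cR$: for \textsf{Abs}, $\cR(\lambda x.\A\tau,\lambda x.\B\tau)=\cR(\A\tau,\B\tau)$; for \textsf{Ori}, symmetry of $\cR$; and for \textsf{Dec}, the congruence clause above together with $\cR(f,g)\ge\deg_\tau$ gives $\deg_\tau=\deg'_1\wedge\deg_{\varphi_1\tau}$ outright. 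Since $\sigma_1=\sigma_0$ and $\dom(\varphi_1\tau)=\dom(\tau)$, the generality claim is exactly the hypothesis $\sigma_0\precsim_{\cR,\cut}^{\dom(\tau)}\tau$.

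The \textsf{SV} and \textsf{LF} cases share one mechanism, which I would make the backbone of the argument: build $\varphi_1$ on the freshly introduced variables so that $\vartheta\varphi_1\tau$ is $\deg_\tau$-similar to $\tau$ on every free variable of $P_0$, and then transfer the unification degree from $(P,\tau)$ to $(P\vartheta,\varphi_1\tau)$ using $\wedge$-transitivity of $\cR$ (the theorem that $\cR$ is a similarity relation): for each equation $\C_1\simeq_{\cR,\cut}^?\C_2$ of $P$ one gets $\cR(\C_1\vartheta\varphi_1\tau,\C_2\vartheta\varphi_1\tau)\ge\cR(\C_1\vartheta\varphi_1\tau,\C_1\tau)\wedge\cR(\C_1\tau,\C_2\tau)\wedge\cR(\C_2\tau,\C_2\vartheta\varphi_1\tau)\ge\deg_\tau$, so $\deg_{\varphi_1\tau}\ge\deg_\tau$, and the same similarity witnesses $\sigma_1=\sigma_0\vartheta\precsim_{\cR,\cut}\varphi_1\tau$. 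For \textsf{SV} this similarity is in fact exact: from $\cR(F(\vec x)\tau,F(\vec y)\tau)>0$ the structural consequences of $\cR(\cdot,\cdot)>0$ force $F\tau=\lambda x_1,\dots,x_n.t$ with $t$ depending only on the common arguments $z_1,\dots,z_m$, so setting $\varphi_1=\{H\mapsto\lambda z_1,\dots,z_m.t\}$ gives $F\vartheta\varphi_1=F\tau$, and since $F\tau$ is a pattern and $\min$ is idempotent, the Proposition keeps all degrees unchanged, so $\deg_{\varphi_1\tau}=\deg_\tau$.

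The \textsf{LF} case is where I expect the real work, and I would isolate a completeness statement for $\varelimsu$ to support it. The subalgorithm computes a \emph{crisp} most general unifier $\varphi$ via \textsf{VE1}/\textsf{VE2}, and $\vartheta=\varphi|_V$ commits the head of $F$ to the \emph{rigid} head $a$; the difficulty is that the given $\tau$ is only a \emph{fuzzy} unifier, so $F\tau$ may have a head $h\ne a$ with $\cR_A(h,a)\ge\deg_\tau$. The plan is to prove, by induction following the \textsf{VE1}/\textsf{VE2} derivation, that any such fuzzy unifier factors through $\vartheta$ modulo $\simeq_{\cR,\cut}$: there is a $\varphi_1$ on the fresh variables $H_1,\dots,H_m$ (chosen disjoint from $\dom(\tau)$ and matching $\B_i\tau$ at each argument position) such that $\vartheta\varphi_1\tau$ is $\deg_\tau$-similar to $\tau$, the head discrepancy $\cR_A(a,h)\ge\cut$ being exactly what $\simeq_{\cR,\cut}$ and the transitivity estimate above can absorb. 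The same estimate then yields $\deg_{\varphi_1\tau}\ge\deg_\tau$, while the deferred head-mismatch cost resurfaces, as in Example~\ref{exmp:unif}, only at the later \textsf{Dec} steps that decompose the now $a$-headed instances. The main obstacle is thus reconciling the crisp commitment made by $\varelimsu$ with the weaker fuzzy hypothesis on $\tau$; establishing this $\varelimsu$-completeness lemma (fuzzy unifiers lie below the crisp mgu modulo $\simeq_{\cR,\cut}$, with the head cost charged on decomposition) is the crux on which the whole case rests.
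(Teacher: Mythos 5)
Your proposal is correct and, in structure, is essentially the paper's own proof: the same rule-by-rule case analysis with \textsf{Fail} excluded, $\varphi_1=\varepsilon$ for \textsf{Abs}, \textsf{Ori}, \textsf{Dec}, the witness $\varphi_1=\{H\mapsto \lambda z_1,\ldots,z_m.t\}$ read off from $F\tau$ for \textsf{SV}, and for \textsf{LF} the key claim that every $(\cR,\cut)$-unifier factors through the crisp answer $\vartheta$ of $\varelimsu$ modulo $\simeq_{\cR,\cut}$. The only real divergence is how that \textsf{LF} claim is discharged. You plan a fresh induction over \textsf{VE1}/\textsf{VE2} derivations; the paper instead cites Nipkow's result that $\varelimsu$ computes a crisp most general pattern unifier $\vartheta$ (so $F(x_1,\ldots,x_n)\vartheta = a(\B_1,\ldots,\B_m)\vartheta$) and then asserts $\vartheta\precsim_{\cR,\cut}^{\dom(\vartheta)}\tau$, which is precisely your ``$\varelimsu$-completeness lemma''. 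That assertion is cheap to justify because $\cR(\A,\B)>0$ already forces identical term structure, with the same free variables at the same positions and only constants allowed to differ; hence the fuzzy unifier $\tau$ instantiates the crisp mgu once the head/constant discrepancies (all of degree $\ge\deg_\tau$) are absorbed by $\simeq_{\cR,\cut}$, exactly as you predict. So the induction you flag as the crux would go through, but it is heavier than needed and is left unexecuted in your sketch; conversely, the part you compress into ``the same similarity witnesses'' is where the paper invests most of its effort, namely proving $\sigma_1\precsim_{\cR,\cut}^{\dom(\varphi_1\tau)}\varphi_1\tau$ by combining $\varphi_1$ with a witness $\psi_0$ for $\sigma_0\precsim_{\cR,\cut}^{\dom(\tau)}\tau$ into $\psi_1=\varphi_1\cup\psi_0$ (chosen with suitable disjointness conditions) and checking variable by variable over $\dom(\varphi_1\tau)$, including the non-obvious case $X\sigma_0\vartheta\neq X\sigma_0$.
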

\begin{proof}
Assume without loss of generality that $\tau$ is idempotent. We prove the lemma by case distinction on the rules applied by {\stepsu}. For rules \textsf{Abs} and \textsf{Ori} it is simple. The \textsf{Fail} rule cannot apply, because it handles cases for which $P$ is not solvable, while by assumption, $\tau$ is a solution of $P$. We concentrate on the remaining rules.

\begin{description}
 
\item[\textsf{Dec:}] In this case, $P_0=\{ f(\A_1,\ldots,\A_n)  \simeq_{\cR,\cut}^? g(\B_1,\ldots,\B_n)\} \uplus P$ and the application of the rule gives $P_1; \sigma_1; \deg_1$, where $P_1=\{ \A_1  \simeq_{\cR,\cut}^? \B_1,\ldots, \A_n  \simeq_{\cR,\cut}^? \B_n\} \cup P$, $\sigma_1=\sigma_0$, and $\deg_1 = \deg_0 \wedge \cR(f,g)$. We take $\cR(f,g)$ as $\deg'_1$ and $\varphi_1=\varepsilon$. Assume $\tau$ is an $(\cR,\cut)$-unifier of $P_0$ with degree $\cut \le \deg_\tau $ and $\sigma_0 \preceq_{\cR,\cut}^{\dom(\tau)} \tau$. Then $\tau$ is also  an $(\cR,\cut)$-unifier of $P$ and we denote its degree by $\deg_P$. It implies that 
\begin{align*}
     \cut \le \deg_\tau = {} &\cR(f(\A_1\tau,\ldots,\A_n \tau),  g(\B_1\tau,\ldots,\B_n\tau)) \wedge \deg_P  \\
    ={}& \cR(f,g) \wedge \cR(\A_1\tau,\B_1\tau) \wedge \cdots \wedge \cR(\A_n\tau,\B_n\tau) \wedge \deg_P  \\
    ={}& \deg'_1 \wedge \cR(\A_1\varphi_1\tau,\B_1\varphi_1\tau) \wedge \cdots \wedge \cR(\A_n\varphi_1\tau,\B_n\varphi_1\tau) \wedge \deg_P.
\end{align*}
Denote $\cR(\A_1\varphi_1\tau,\B_1\varphi_1\tau) \wedge \cdots \wedge \cR(\A_n\varphi_1\tau,\B_n\varphi_1\tau) \wedge \deg_P$ by $\deg_{\varphi_1\tau}$. Then the previous inequality can be written as $\cut\le \deg'_1 \wedge \deg_{\varphi_1\tau}$, from which we get $\cut \le \deg_{\varphi_1\tau}$, implying that $\varphi_1\tau$ is an $(\cR,\cut)$-unifier of $\{ \A_1  \simeq_{\cR,\cut}^? \B_1,\ldots, \A_n  \simeq_{\cR,\cut}^? \B_n\} \cup P$ with degree $\deg_{\varphi_1\tau}$. Hence, we got that $\varphi_1\tau$ is an $(\cR,\cut)$-unifier of $P_1$ with degree $\cut \le \deg_{\varphi_1\tau}$ such that $\deg_\tau = \deg'_1 \wedge \deg_{\varphi_1\tau}$. Finally, from $\sigma_1=\sigma_0$, $\dom(\varphi_1\tau)=\dom(\tau)$, and $\sigma_0 \preceq_{\cR,\cut}^{\dom(\tau)} \tau$, we conclude $\sigma_1 \preceq_{\cR,\cut}^{\dom(\varphi_1\tau)} \tau$. 

   \item[\textsf{SV:}]  In this case, $P_0=\{ F(x_1,\ldots, x_n)  \simeq_{\cR,\cut}^? F(y_1,\ldots,y_n)\} \uplus P$ and the application of the rule gives $P_1; \sigma_1; \deg_1$ where $P_1= P\vartheta$, $\sigma_1=\sigma_0\vartheta$, and $\deg_1 = \deg_0$ for the substitution $\vartheta=\{F\mapsto \lambda x_1,\ldots,x_n. H(z_1,\ldots,z_m)\}$ where $\{z_1,\ldots,z_m\}=\{x_i \mid \allowbreak x_i = y_i, \allowbreak 1\le i \le n\}$, $m\ge 0$. 

Since $\tau$ is an $(\cR,\cut)$-unifier of $P_0$, we have $F\tau= \lambda x_1,\ldots,x_n. t$, where $t$ is such a term that if some $x_i$ or $y_i$, $1 \le i \le n$, occurs in $t$, then this variable should belong to $\{z_1,\ldots,z_m\}$, otherwise $\tau$ would not be a unifier of $F(x_1,\ldots, x_n) \simeq_{\cR,\cut}^? F(y_1,\ldots,y_n)$.  It implies that $F\tau$ can be actually represented as
\begin{equation}
    \label{eq:sv-su-:F}
    F\tau= (\lambda x_1,\ldots,x_n. H(z_1,\ldots,z_m))\{ H\mapsto \lambda z_1,\ldots,z_m. t\}.
\end{equation}

We take $\deg'_1=1$ and $\varphi_1 = \{H\mapsto \lambda z_1,\ldots,z_m. t \}$. Then

\begin{itemize}
    \item $\deg_1=\deg_0 = \deg_0\wedge \deg'_1$,
    \item $\dom(\varphi_1)= \{H\} = \fv(P_1)\setminus \fv(P_0)$,
    \item $\fv(\ran(\varphi_1)) \cap \dom(\tau)=\emptyset$, since $\tau$ is idempotent.
\end{itemize}

What remains to show is 

\begin{itemize}
    \item  $\varphi_1\tau$ is an $(\cR,\cut)$-unifier of $P_1$ with degree $\cut \le \deg_{\varphi_1\tau}$ such that $\deg_\tau \le \deg'_1 \wedge \deg_{\varphi_1\tau}=\deg_{\varphi_1\tau}$, and
    \item $\sigma_1 \precsim_{\cR,\cut}^{\dom(\varphi_1\tau)} \varphi_1\tau$.
\end{itemize}

To show the first of them, we prove that $\vartheta\varphi_1\tau$ is an $(\cR,\cut)$-unifier of $P_0$ with degree $\deg_\tau$. For this, take a variable $X\in \fv(P_0)$. If $X\neq F$, then $X\vartheta\varphi_1\tau = X\tau$ since $X\vartheta\varphi_1=X$. As for $X=F$, we have 
\begin{align*}
    F\vartheta\varphi_1\tau= {}  & \big( \lambda x_1,\ldots,x_n. H(z_1,\ldots,z_m) \big)\varphi_1\tau  = \text{ (by equality (\ref{eq:sv-su-:F}))}\\
    {}= {} & F\tau\tau =\text{ (by the idempotence of $\tau$) } = F\tau.
\end{align*}

Hence, we can conclude that $\vartheta\varphi_1\tau$ is an $(\cR,\cut)$-unifier of $P_0$ with degree $\cut \le\deg_{\vartheta\varphi_1\tau}=\deg_\tau$, implying that $\varphi_1\tau$ is an $(\cR,\cut)$-unifier of $P_1=P\vartheta$ with degree $\deg_\tau$.

Now we should prove $\sigma_1 \precsim_{\cR,\cut}^{\dom(\varphi_1\tau)} \varphi_1\tau$. First, note that from $\sigma_0 \precsim_{\cR,\cut}^{\dom(\tau)} \tau$, there exists a substitution $\psi_0$ (whose domain is disjoint from the free variables in the range of $\tau$)  such that $X\sigma_0\psi_0 \simeq_{\cR,\cut} X\tau$ for all $X\in \dom(\tau)$. Define $\psi_1=\varphi_1 \cup \psi_0$. (Note that $\psi_0$ can be chosen so that $\dom(\psi_0)\cap (\dom(\varphi_1) \cup \fv(\ran(\varphi_1)))=\emptyset$ and $\dom(\varphi_1)\cap (\dom(\psi_0)\cup \fv(\ran(\psi_0))) = \emptyset$.)  Take a variable $X \in \dom(\varphi_1\tau)=\{H\} \uplus \dom(\tau)$ and show that $X\sigma_1 \psi_1 \simeq_{\cR,\cut} X\varphi_1\tau$.

\begin{enumerate}
    \item $X=H$. Then $X\sigma_1 \psi_1 = X\psi_1=X\varphi_1 = \lambda z_1,\ldots,z_m.t$, where $t$ is such a term that $F\tau = \lambda x_1,\ldots,x_n.t$. But then, $\fv(t)\cap \dom(\tau) =\emptyset$ and we have $\lambda z_1,\ldots,z_m.t=\lambda z_1,\ldots,z_m.t\tau = X\varphi_1\tau$. Hence, $X\sigma_1 \psi_1 =X\varphi_1\tau$.
    \item $X=F$. Then 
    \begin{align*}
         X\sigma_1 \psi_1 ={} & X\sigma_0\vartheta \psi_1 = X\vartheta\psi_1 = \lambda x_1,\ldots,x_n.H(z_1,\ldots,z_m)\psi_1\\
         ={} &  \lambda x_1,\ldots,x_n.H(z_1,\ldots,z_m)\varphi_1 = \lambda x_1,\ldots,x_n.t.\\
         X\varphi_1\tau = {} & X\tau = \lambda x_1,\ldots,x_n.t.
    \end{align*}
     Hence, also in this case, $X\sigma_1 \psi_1 =X\varphi_1\tau$.
     \item $X\in \dom(\tau)\setminus \{F\}$. We have two subcases:
     \begin{itemize}
         \item $X\sigma_0\vartheta=X\sigma_0$. Then 
         \begin{align*}
             X\sigma_1 \psi_1= {} & X\sigma_0\vartheta\psi_1=X\sigma_0\psi_1  \\
            = {} &  \simeq_{\cR,\cut} X\tau \text{ (by the assumption on $\psi_0$) }\\
             ={} & X\varphi_1\tau \text{ (since $X\notin \dom(\varphi_1$)).}
         \end{align*}
         \item $X\sigma_0\vartheta \neq X\sigma_0$, then we should show $Z\sigma_1\psi_1 \simeq_{\cR,\cut} Z\varphi_1\tau$ for all $Z\in \fv(X\sigma_0\vartheta)$, which implies $X\sigma_1 \psi_1 \simeq_{\cR,\cut} X\varphi_1\tau$.

         Proving $Z\sigma_1\psi_1 \simeq_{\cR,\cut} Z\varphi_1\tau$ for all $Z\in \fv(X\sigma_0\vartheta)$. From $X\sigma_0\vartheta \neq X\sigma_0$ we have $F\in \fv(X\sigma_0)$ and, therefore, $H\in \fv (X\sigma_0\vartheta) $. However, for $H$ we already saw that $H\sigma_1\psi_1 = H\varphi_1\tau$. For any other $Y\in \fv(X\sigma_0\vartheta)$ (i.e., $Y\neq H$, $Y\neq F$) we should have  $Y\in \fv(X\sigma_0)$, implying $Y\notin \dom(\sigma_0)$ (since $Y$ appears in the range of $\sigma_0$ and $\sigma$'s are idempotent). From $Y\notin \dom(\sigma_0)$ and $Y\neq F$ we get $Y\notin \dom(\sigma_1)$ and, consequently, $Y\sigma_1\psi_1=Y\psi_1$. Then we have
         \begin{align*}
             Y\sigma_1\psi_1= {} & Y\psi_1 = \text{ (since $Y\neq H$) } Y\psi_0 = \text{ (since $Y\notin \dom(\sigma_0)$) } \\
             ={} & Y\sigma_0\psi_0 \simeq_{\cR,\cut} Y\tau = Y\varphi_1\tau.
         \end{align*}
         It proves $Z\sigma_1\psi_1 \simeq_{\cR,\cut} Z\varphi_1\tau$ for all $Z\in \fv(X\sigma_0\vartheta)$.
       \end{itemize}
       In both cases, we get $X\sigma_1 \psi_1 \simeq_{\cR,\cut} X\varphi_1\tau$.
     \end{enumerate}
   Hence, in all three cases, we proved  $X\sigma_1 \psi_1 \simeq_{\cR,\cut} X\varphi_1\tau$. It implies that $\sigma_1 \precsim_{\cR,\cut}^{\dom(\varphi_1\tau)} \varphi_1\tau$.

\item[\textsf{LF:}] Here $P_0=\{F(x_1,\ldots, x_n)  \simeq_{\cR,\cut}^? a(\B_1,\ldots,\B_m) \} \uplus P$ and the application of the rule gives $P_1; \sigma_1; \deg_1$ where $P_1= P\vartheta$, $\sigma_1=\sigma_0\vartheta$, and $\deg_1 = \deg_0$ for the substitution $\vartheta$ where  $\vartheta$ is computed by $\varelimsu(F(x_1,\ldots, x_n), a(\B_1,\ldots,\B_m))$. Note that $\varelimsu$ applies (crisp) rules \textsf{VE1} and \textsf{VE2}. From~\cite{DBLP:conf/lics/Nipkow93} it follows that repeated application of them leads to a most general pattern unifier of $F(x_1,\ldots, x_n)$ and $a(\B_1,\ldots,\B_m)$. Therefore, $\vartheta$ is such an mgu and we have $F(x_1,\ldots, x_n)\vartheta = a(\B_1,\ldots,\B_m)\vartheta$ and $\vartheta \precsim_{\cR,\cut}^{\dom(\vartheta)}\tau$, where $\dom(\vartheta)=\{F\}\cup \fv(a(s_1,\ldots,s_m))$. Note that all variables that appear in the range of $\vartheta$ are new because none of the rules retain the original variables from $a(\B_1,\ldots,\B_m)$. Then there exists a substitution $\varphi_1$ such that 
\begin{itemize}
    \item $\dom(\varphi_1)= \fv(P_1) \setminus \fv(P_0)= \fv(\ran(\vartheta)) $,  
    \item $\fv(\ran(\varphi_1))\cap \dom(\tau)=\emptyset$, and
    \item $X\vartheta\varphi_1 \simeq_{\cR,\cut} X\tau$ for all $X\in \dom(\vartheta)$.
\end{itemize}
Therefore, by the idempotence of $\tau$, we get $X \vartheta\varphi_1\tau \simeq_{\cR,\cut} X\tau$ for all $X\in \fv(P_0)$.
Moreover, we have
\begin{itemize}
    \item $\cR(F \vartheta\varphi_1\tau,  a(\B_1,\ldots,\B_m)\vartheta\varphi_1\tau ) \ge \cR(F \tau,  a(\B_1,\ldots,\B_m) \tau )$, and
    \item $X\vartheta\varphi_1\tau = X\tau$ for all $X\in \fv(P_0)\setminus \dom(\vartheta)$. 
\end{itemize}

Therefore, $\vartheta\varphi_1\tau$ solves $P$ with the degree $\deg_{\vartheta\varphi_1\tau}$ that is better than $\deg_\tau$. Note that $\deg_{\vartheta\varphi_1\tau}$  and the unification degree of $\varphi_1\tau$ for $P\vartheta$ (i.e., $\deg_{\varphi_1\tau}$) are the same since $s\simeq_{\cR,\cut}^? t\in P$ iff $s\vartheta\simeq_{\cR,\cut}^? t\vartheta\in P\vartheta$ and we have
\begin{align*}
     \deg_\tau\le \deg_{\vartheta\varphi_1\tau} = {} & \bigwedge_{s\simeq_{\cR,\cut}^? t\in P} \cR(s\vartheta\varphi_1\tau, t\vartheta\varphi_1\tau) \\
    ={} & \bigwedge_{s\vartheta\simeq_{\cR,\cut}^? t\vartheta\in P\vartheta} \cR(s\vartheta\varphi_1\tau, t\vartheta\varphi_1\tau) = \deg_{\varphi_1\tau}.
\end{align*}

From $\deg_1 = \deg_0 = \deg'_1 \wedge \deg_0$ for $\deg'_1=1$, this implies that $\varphi_1\tau$ solves $P_1=P\vartheta$ with degree $\deg_\tau\le \deg_{\varphi_1\tau}$.

The last thing to prove is $\sigma_1 \precsim_{\cR,\cut}^{\dom(\varphi_1\tau)} \varphi_1\tau$. From $\sigma_0 \precsim_{\cR,\cut}^{\dom(\tau)} \tau$, there exists a substitution $\psi_0$ such that
\begin{itemize}
    \item $X\sigma_0\psi_0 \simeq_{\cR,\cut} X\tau$ for all $X\in \dom(\tau)$,
    \item $\dom(\psi_0) \cap \fv(\ran(\tau))=\emptyset$,
    \item $\dom(\psi_0)\cap (\dom(\varphi_1) \cup \fv(\ran(\varphi_1)))=\emptyset$, and 
    \item $\dom(\varphi_1)\cap (\dom(\psi_0)\cup \fv(\ran(\psi_0))) = \emptyset$.
\end{itemize}

Let $\psi_1=\varphi_1\cup \psi_0$. To prove $\sigma_1 \precsim_{\cR,\cut}^{\dom(\varphi_1\tau)} \varphi_1\tau$, we show $X\sigma_1 \psi_1 \simeq_{\cR,\cut} X\varphi_1\tau$ for all $X\in \dom(\varphi_1\tau)$. We have the following cases:
\begin{itemize}
    \item $X\in\dom(\varphi_1)$. Then $X\sigma_1\psi_1 =X\sigma_0\vartheta \psi_1 = X\vartheta \psi_1=X \psi_1=X\varphi_1 = X\varphi_1\tau$ (since $\fv(\ran(\varphi_1)\cap \dom(\tau)=\emptyset$). 
    \item $X \in \dom(\vartheta)$. Then $X\sigma_1\psi_1=X\sigma_0\vartheta \psi_1 = X \vartheta \psi_1 = X\vartheta \varphi_1 \simeq_{\cR,\cut} X\tau = X\varphi_1\tau$.
    \item If $X\in \dom(\tau) \setminus \dom(\vartheta)$.  We have two subcases:
    \begin{itemize}
        \item If $X\sigma_0\vartheta =X\sigma_0$. then $X\sigma_1\psi_1=X\sigma_0\vartheta \psi_1 = X\sigma_0 \psi_1 =X\sigma_0 \psi_0 \simeq_{\cR,\cut} x\tau = x\varphi_1\tau$. 
        
        \item If $X\sigma_0\vartheta \neq X\sigma_0$, then we should show $Z\sigma_1\psi_1 \simeq_{\cR,\cut} Z\varphi_1\tau$ for all $Z\in \fv(X\sigma_0\vartheta)$, which implies $X\sigma_1 \psi_1 \simeq_{\cR,\cut} X\varphi_1\tau$.

         Proving $Z\sigma_1\psi_1 \simeq_{\cR,\cut} Z\varphi_1\tau$ for all $Z\in \fv(X\sigma_0\vartheta)$. For each $Y\in \fv(\ran(\vartheta))= \dom(\varphi_1)$ we already saw that $Y\sigma_1\psi_1  = Y\varphi_1\tau$. For each $Y\in \fv(\ran(\sigma_0))\setminus (\dom(\vartheta))$ we have $Y\sigma_0\psi_0 \simeq_{\cR,\cut} Y\tau = Y\varphi_1\tau$. On the other hand, for such a $Y$ we have $Y\sigma_0\psi_0 = Y\psi_0 = Y = Y\psi_1 = Y\sigma_1\psi_1$. Hance, in both cases we got $Y\sigma_1\psi_1 \simeq_{\cR,\cut} Y\varphi_1\tau$, which proves that $Z\sigma_1\psi_1 \simeq_{\cR,\cut} Z\varphi_1\tau$ for all $Z\in \fv(X\sigma_0\vartheta)$.
    \end{itemize}
\end{itemize}
Hence, in all cases, we get $X\sigma_1\psi_1 \simeq_{\cR,\cut} X\varphi_1\tau$. It implies that $\sigma_1\precsim_{\cR,\cut}^{\dom(\varphi_1\tau)} \varphi_1\tau$.  \qed
\end{description}    
\end{proof}

\begin{theorem}[Completeness of \hopsu]
\label{thm:completeness:hopsu}
    Given a similarity relation $\cR$, a cut value $\cut$, and two terms $\A$ and $\B$, if there exists an $(\cR,\cut)$-unifier $\tau$ of $\A$ and $\B$ with degree $\deg_\tau \ge \cut$, then  $\hopsu(\A,\B,\cR,\cut)$ computes an answer $(\sigma,\deg)$ such that $\sigma \precsim_{\cR,\cut}^{\dom(\tau)} \tau$ and $\deg \ge \deg_\tau$. 
\end{theorem}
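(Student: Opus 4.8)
The plan is to lift the single-step completeness result, Lemma~\ref{lem:completeness:stepsu}, to an entire {\hopsu} derivation by induction on its length, maintaining at every configuration a \emph{residual unifier} that witnesses solvability. I would start from the initial configuration $P_0;\sigma_0;\deg_0 = \{\A \simeq_{\cR,\cut}^? \B\};\varepsilon;1$ and set $\tau_0 := \tau$. The base case of the invariant holds at once: $\tau$ unifies $P_0$ with degree $\deg_\tau \ge \cut$ by hypothesis, and $\varepsilon \precsim_{\cR,\cut}^{\dom(\tau)}\tau$ because reflexivity of $\cR$ gives $X\varepsilon\tau = X\tau \simeq_{\cR,\cut} X\tau$ for every $X \in \dom(\tau)$. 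As usual I may assume $\dom(\tau)\subseteq\fv(P_0)$, so that $\dom(\tau)$ is disjoint from every variable the algorithm introduces afresh.

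For the inductive step, suppose the derivation has reached $P_i;\sigma_i;\deg_i$ together with a residual unifier $\tau_i$ satisfying (i) $\tau_i$ unifies $P_i$ with degree $\deg_{\tau_i}\ge\cut$ and (ii) $\sigma_i \precsim_{\cR,\cut}^{\dom(\tau_i)}\tau_i$. These are exactly the hypotheses of Lemma~\ref{lem:completeness:stepsu}, so a step ${\stepsu}(P_i;\sigma_i;\deg_i) = (P_{i+1};\sigma_{i+1};\deg_{i+1})$ is available and supplies $\deg_{i+1} = \deg_i \wedge \deg'_{i+1}$ together with a substitution $\varphi_{i+1}$ such that $\tau_{i+1} := \varphi_{i+1}\tau_i$ reestablishes (i) and (ii) for $P_{i+1};\sigma_{i+1};\deg_{i+1}$, while also giving the degree relation $\deg_{\tau_i} \le \deg'_{i+1} \wedge \deg_{\tau_{i+1}}$. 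Two consequences drive the argument. First, since the lemma returns a genuine configuration (and its proof records that \textsf{Fail} cannot fire on a solvable problem), no step produces $\bot$; combined with Theorem~\ref{thm:termination}, the derivation must halt in a success configuration $P_n;\sigma_n;\deg_n$ with $P_n = \emptyset$, so that $\sigma = \sigma_n$ and $\deg = \deg_n$. Second, because $\deg_0 = 1$, we have $\deg = \deg_n = \deg'_1 \wedge \cdots \wedge \deg'_n$, and since each $\varphi_j$ binds only fresh variables, $\dom(\tau)\subseteq\dom(\tau_n)$ with every extra variable of $\dom(\tau_n)$ lying outside $\dom(\tau)$.

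It then remains to read off the two conclusions. For the degree I would telescope the chain $\deg_\tau = \deg_{\tau_0} \le \deg'_1 \wedge \deg_{\tau_1} \le \cdots \le \deg'_1 \wedge \cdots \wedge \deg'_n \wedge \deg_{\tau_n}$, applying monotonicity of $\wedge$ at each stage; since $P_n = \emptyset$, its unifier $\tau_n$ has degree $\deg_{\tau_n} = 1$ (the empty meet), whence $\deg_\tau \le \deg'_1 \wedge \cdots \wedge \deg'_n = \deg$. For generality, the final instance of the invariant provides a witness $\psi$ with $X\sigma_n\psi \simeq_{\cR,\cut} X\tau_n$ for all $X \in \dom(\tau_n) \supseteq \dom(\tau)$. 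Because every $X \in \dom(\tau)$ is an original variable disjoint from all fresh $\dom(\varphi_j)$, we get $X\tau_n = X\varphi_n\cdots\varphi_1\tau = X\tau$, and hence $X\sigma\psi \simeq_{\cR,\cut} X\tau$ for all $X\in\dom(\tau)$, i.e.\ $\sigma \precsim_{\cR,\cut}^{\dom(\tau)}\tau$.

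I expect the main difficulty to lie not in any single computation but in the bookkeeping that glues the iterated lemma together: one must verify that the residual unifiers $\tau_i$ compose coherently, with the freshness conditions $\fv(\ran(\varphi_{i+1}))\cap\dom(\tau_i)=\emptyset$ ensuring $X\tau_n = X\tau$ on the original variables, and one must argue rigorously that solvability is propagated so that \textsf{Fail} is never reached and termination forces success. Everything else reduces to monotonicity of the minimum T-norm and the convention that the empty unification problem has degree $1$.
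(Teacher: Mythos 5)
Your proposal is correct and follows essentially the same route as the paper's proof: both iterate Lemma~\ref{lem:completeness:stepsu} along the derivation while maintaining a residual unifier $\varphi_1\cdots\varphi_i\tau$, combine the observation that \textsf{Fail} cannot fire on a solvable problem with Theorem~\ref{thm:termination} to force a successful final configuration $\emptyset;\sigma_n;\deg_n$, telescope the degree inequalities, and use freshness/disjointness of the $\varphi_j$ to restrict the generality claim from $\dom(\varphi_1\cdots\varphi_n\tau)$ to $\dom(\tau)$. The only cosmetic difference is that you make the empty-problem degree $\deg_{\tau_n}=1$ explicit, where the paper instead carries a generic residual degree $\deg'_\tau$ satisfying $\deg_\tau \le \deg_n \wedge \deg'_\tau$.
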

\begin{proof}
First note that for any configuration $P;\sigma;\deg$, if $P$ is $(\cR,\cut)$-unifiable then ${\stepsu}(P;\sigma;\deg)\neq \bot$. This is based on the observation that {\stepsu} gives $\bot$ in only three cases: if $P$ contains an equation between terms of different types, or an equation $f(\A_1,\ldots,\A_n) \simeq_{\cR,\cut}^? g(\B_1,\ldots,\B_n)$ with $\deg \land \cR(f,g)<\cut$, or an equation $F(\A_1,\ldots,\A_n) \simeq_{\cR,\cut}^? a(\B_1,\ldots,\B_n)$ with $F\in \fv (a(\B_1,\ldots,\B_n))$. But in none of these cases is $P$ $(\cR,\cut)$-unifiable.

Hence, for the initial configuration $P_0;\sigma_0;\deg_0=\{\A \simeq_{\cR,\cut}^? \B\};\varepsilon;1$ we have that $\tau$ is an $(\cR,\cut)$-unifier of $P_0$ with degree $\deg_\tau \ge \cut $ and $\sigma_0 \precsim_{\cR,\cut}^{\dom(\tau)} \tau$. By Lemma~\ref{lem:completeness:stepsu}, we can make ${\stepsu}(P_0;\sigma_0;\deg_0)=(P_1;\sigma_1;\deg_1)$ such that

\begin{itemize}
    \item $\deg_1=\deg_0\wedge \deg'_1$ for some $\deg'_1$,
    \item there exists a substitution $\varphi_1$ such that
      \begin{itemize}
          \item $\dom(\varphi_1)=\fv(P_1)\setminus \fv(P_0)$ (the set of new free variables in $P_1$),
          \item $\fv(\ran(\varphi_1)) \cap \dom(\tau)=\emptyset$,
          \item $\varphi_1\tau$ is an $(\cR,\cut)$-unifier of $P_1$ with degree $\deg'_\tau \ge \cut $ such that $\deg_\tau \le \deg'_1 \wedge \deg'_\tau$, and
          \item $\sigma_1 \precsim_{\cR,\cut}^{\dom(\varphi_1\tau)} \varphi_1\tau$.
      \end{itemize}
\end{itemize}
The obtained configuration $P_1;\sigma_1;\deg_1$ and $\varphi_1\tau$ satisfy the conditions of Lemma~\ref{lem:completeness:stepsu}. Hence, we can iterate applications of {\stepsu} (finitely many times, due to Theorem~\ref{thm:termination}), obtaining the maximal chain of configurations $P_1;\sigma_1;\deg_1 \leadsto \cdots \leadsto P_n;\sigma_n;\deg_n$. Then 

\begin{itemize}
    \item $P_n = \emptyset$, otherwise it would contradict the fact that it has a solution;
    \item $\sigma_n\precsim_{\cR,\cut}^{\dom(\varphi_1\cdots\varphi_n\tau)} \varphi_1\cdots\varphi_n\tau$;
    \item $\deg_n=\deg_0 \wedge \deg'_1 \wedge \cdots \wedge \deg'_n$, where $\deg_0=1$ and for all $1\le i\le n$, $\deg'_i$ is such that $\deg_i=\deg_{i-1}\wedge \deg'_i$.
\end{itemize}

Hence, we obtained $\hopsu(\A,\B,\cR,\cut)=(\sigma_n,\deg_n)$, where $\sigma_n\precsim_{\cR,\cut}^{\dom(\varphi_1\cdots\varphi_n\tau)} \varphi_1\cdots\varphi_n\tau$, and there exists $\deg'_\tau$ such that $\deg_\tau\le \deg'_1 \wedge \cdots \wedge \deg'_n\wedge \deg'_\tau = \deg_n \wedge \deg'_\tau \le \deg_n$. Since  $\dom(\varphi_1\cdots\varphi_n) \cap \dom(\tau) = \emptyset$ and $\fv(\ran(\varphi_1\cdots\varphi_n)) \cap \dom(\tau) = \emptyset$, the composition $\varphi_1\cdots\varphi_n\tau$ can be also represented as the disjoint union of the set representations of these substitutions: $(\varphi_1\cdots\varphi_n) \uplus \tau$. Therefore, from $\sigma_n\precsim_{\cR,\cut}^{\dom(\varphi_1\cdots\varphi_n\tau)} \varphi_1\cdots\varphi_n\tau$ we get $\sigma_n\precsim_{\cR,\cut}^{\dom(\tau)} \tau$.
We can take $\sigma_n$ and $\deg_n$ in the role of $\sigma$ and $\deg$, respectively, which finishes the proof. \qed
\end{proof}

\section{Discussion}
    \label{sect:discussion}

\subsubsection*{From fuzzy to crisp.}
    When $\cut=1$, we are essentially in the crisp case, and our algorithm can be seen as a rule-based variant of the higher-order pattern unification algorithm~\cite{DBLP:journals/logcom/Miller91,DBLP:conf/lics/Nipkow93}. It is closer to Nipkow's version~\cite{DBLP:conf/lics/Nipkow93} as our use of $\varelimsu$ introduces a form of configuration transformation strategy. On the other hand, our approach is more flexible in selecting which equations to transform, since our rules operate on sets rather than lists. In \cite{DBLP:conf/lics/Nipkow93}, it was noted that using sets instead of lists would lead to divergence, but with our rules and control, it does not happen. We imitate Nipkow's requirement to work at the head of the list, but only when the left-hand side of the selected equation is a flexible term. Below is an example demonstrating how our algorithm handles problematic equations $\{F =^? c(G), G=^? c(F)\}$ from \cite{DBLP:conf/lics/Nipkow93}:
    \begin{align*}
        & \{F \simeq_{\cR,1}^? c(G),\ G \simeq_{\cR,1}^? c(F)\}; \varepsilon; 1 \leadsto_{\textsf{LF}} \\
        & \text{Calling } \varelimsu: \\
        & \qquad \{F \simeq^? c(G)\}; \varepsilon \leadsto_{\textsf{VE1}}\\
        & \qquad  \{H \simeq^? G\}; \{F \mapsto c(H)\} \leadsto_{\textsf{VE2}} \\
        & \qquad \emptyset; \{F \mapsto c(G), \, H\mapsto G\} \\
        & \{G \simeq_{\cR,1}^? c(c(G))\}; \{F \mapsto c(G)\}; 1 \leadsto_{\textsf{Fail}} \\
        & \bot.
    \end{align*}
    Hence, {\hopsu} fails, as expected.

    It should also be mentioned that our rules allow a rather simple termination measure that facilitates the direct termination proof (instead of translating the problem into first-order unification). 
\subsubsection*{T-norms.}
    Note that using the minimum T-norm (in fact, its idempotence property) is important for the completeness of the algorithm as well as for computing unifiers with the maximum approximation degree. (This is not specific to the higher-order case: it holds even in the first-order case.) The condition $(\deg \wedge \cR(f,g))>\cut$ in the decomposition rule (which is the rule actually computing degrees) is general and applies to any T-norm. 
    For nonidempotent T-norms, we should treat unification problems as multisets. However, even under this modification, the algorithm would be incomplete. To illustrate this, consider a simple similarity relation $\cR(a,b)=0.5$ and the product T-norm defined as $s_1 \wedge s_2 = s_1 * s_2$. Let $\A=f(X,X,X)$, $\B=f(a,b,b)$, and the cut value $\cut=0.3$. Then the unification problem $\A \simeq_{\cR,\cut}^? \B$ has a unifier $\{X \mapsto b\}$, because $\cR(f(b,b,b), f(a,b,b))=0.5>0.3$. However, the (modified) algorithm fails to compute it:
    \begin{align*}
        & \llbrace f(X,X,X) \simeq_{\cR,0.3}^? f(a,b,b)\rrbrace; \varepsilon; 1 \leadsto_{\sf Dec} \\
        & \llbrace X \simeq_{\cR,0.3}^? a, X \simeq_{\cR,0.3}^? b, X \simeq_{\cR,0.3}^? b\rrbrace; \varepsilon; 1 \leadsto_{\sf LF} \\
        & \llbrace a \simeq_{\cR,0.3}^? b, a \simeq_{\cR,0.3}^? b\rrbrace; \{X\mapsto a\}; 1 \leadsto_{\sf Dec} \\
        & \llbrace a \simeq_{\cR,0.3}^? b\rrbrace; \{X\mapsto a\}; 0.5 \leadsto_{\textsf{Fail}} \\
        & \bot
    \end{align*}
    The computation stops with failure
    because the decomposition attempt of $a$ and $b$ would involve the check $0.5 * \cR(a,b)\ge 0.3$, which fails since $0.5 * \cR(a,b) = 0.5 * 0.5 = 0.25 < 0.3$.

    For a lower cut value, e.g., $\cut=0.1$, the algorithm would compute a solution $\{X\mapsto a\}$ with degree $0.25$, failing to discover that there exists another unifier, $\{X\mapsto b\}$, with a better degree $0.3$.

    However, if we consider only linear unification problems (i.e., those where no free variable occurs more than once), the modified algorithm is complete and computes a unifier with the highest degree even for nonidempotent T-norms.

    It is possible to regain completeness for nonidempotent T-norms for the general case by modifying the subprocedure $\varelimsu$: In the \textsf{VE1} rule, allow the variable $F$ to be replaced not only by the term \[\lambda x_1,\ldots,x_n. a(H_1(x_1,\ldots,x_n), \ldots, H_m(x_1,\ldots,x_n)),\] but by any term (non-deterministically) \[\lambda x_1,\ldots,x_n. b(H_1(x_1,\ldots,x_n), \ldots, H_m(x_1,\ldots,x_n))\]  where $b$ is similar to $a$ (provided that it still keeps the degree computed so far above the cut value). The computed degree should be also changed accordingly. Such a modification would lead to multiple (finitely many) most general unifiers, out of which one would need to keep only those that have the highest unification degrees. A detailed discussion of this algorithm goes beyond the scope of this paper, though.

\section{Conclusion}
\label{sect:concl}
We studied unification of higher-order patterns modulo fuzzy similarity relations, using the minimum T-norm. This problem, like its classical (crisp) counterpart, is unitary. We developed a rule-based unification algorithm and proved its termination, soundness, and completeness. The computed most general unifier has the best approximation degree. Our work extends, on one hand, the well-known higher-order pattern unification~\cite{DBLP:journals/logcom/Miller91,DBLP:conf/lics/Nipkow93} to accommodate fuzzy similarity relations. On the other hand, it generalizes the weak unification algorithm~\cite{DBLP:journals/tcs/Sessa02} from first-order terms to simply-typed lambda terms.

There are some interesting directions for future work, which involve both practical applications and further theoretical developments:

\begin{itemize}
    \item Incorporating the unification algorithm developed in this paper into a higher-order fuzzy logic programming formalism combining the powers of languages like Lambda-Prolog~\cite{DBLP:books/daglib/0036008} and FASILL~\cite{DBLP:journals/ijar/IranzoMR20}. It can serve as a foundation of a flexible higher-order knowledge-based system.
    \item Relaxing similarity relations to adapt to more diverse scenarios. One possibility is, e.g., lifting the transitivity requirement and considering proximity relations. This can lead to the generalization of block-based~\cite{DBLP:journals/fss/IranzoR15} or class-based~\cite{DBLP:conf/fuzzIEEE/PauK21,DBLP:conf/lopstr/KutsiaP19} approximate unification algorithms to the higher-order setting. Another option is defining approximation based on (Lawverean) quantales like in quantitative algebras and generalizing the results from \cite{DBLP:conf/ijcar/EhlingK24} to a higher-order case. 
\end{itemize}
\subsubsection{\ackname} This work was supported by the Austrian Science Fund (FWF) project P~35530, and NATO Science for Peace and Security (SPS) Programme G6133.
%


\begin{thebibliography}{10}
\providecommand{\url}[1]{\texttt{#1}}
\providecommand{\urlprefix}{URL }
\providecommand{\doi}[1]{https://doi.org/#1}

\bibitem{DBLP:journals/fss/Ait-KaciP20}
A{\"{\i}}t{-}Kaci, H., Pasi, G.: Fuzzy lattice operations on first-order terms over signatures with similar constructors: {A} constraint-based approach. Fuzzy Sets Syst.  \textbf{391},  1--46 (2020). \doi{10.1016/J.FSS.2019.03.019}

\bibitem{DBLP:books/cp/BarendregtM22}
Barendregt, H., Manzonetto, G.: A Lambda Calculus Satellite. College Publications (2022)

\bibitem{DBLP:journals/cacm/DershowitzM79}
Dershowitz, N., Manna, Z.: Proving termination with multiset orderings. Commun. {ACM}  \textbf{22}(8),  465--476 (1979). \doi{10.1145/359138.359142}

\bibitem{DBLP:books/el/RV01/Dowek01}
Dowek, G.: Higher-order unification and matching. In: Robinson, J.A., Voronkov, A. (eds.) Handbook of Automated Reasoning (in 2 volumes), pp. 1009--1062. Elsevier and {MIT} Press (2001). \href{https://doi.org/10.1016/B978-044450813-3/50018-7}{https://doi.org/10.1016/B978-044450813-3/50018-7}


\bibitem{DBLP:conf/fscd/DunduaKMP20}
Dundua, B., Kutsia, T., Marin, M., Pau, C.: Constraint solving over multiple similarity relations. In: Ariola, Z.M. (ed.) 5th International Conference on Formal Structures for Computation and Deduction, {FSCD} 2020, June 29-July 6, 2020, Paris, France (Virtual Conference). LIPIcs, vol.~167, pp. 30:1--30:19. Schloss Dagstuhl - Leibniz-Zentrum f{\"{u}}r Informatik (2020). \doi{10.4230/LIPICS.FSCD.2020.30}

\bibitem{DBLP:conf/ijcar/EhlingK24}
Ehling, G., Kutsia, T.: Solving quantitative equations. In: Benzm{\"{u}}ller, C., Heule, M.J.H., Schmidt, R.A. (eds.) Automated Reasoning - 12th International Joint Conference, {IJCAR} 2024, Nancy, France, July 3-6, 2024, Proceedings, Part {II}. Lecture Notes in Computer Science, vol. 14740, pp. 381--400. Springer (2024). \href{https://doi.org/10.1007/978-3-031-63501-4\_20}{https://doi.org/10.1007/978-3-031-63501-4\_20}

\bibitem{DBLP:conf/sac/ArcelliF99}
Fontana, F.A., Formato, F.: Likelog: {A} logic programming language for flexible data retrieval. In: Bryant, B.R., Lamont, G.B., Haddad, H., Carroll, J.H. (eds.) Proceedings of the 1999 {ACM} Symposium on Applied Computing, SAC'99, San Antonio, Texas, USA, February 28 - March 2, 1999. pp. 260--267. {ACM} (1999). \doi{10.1145/298151.298348}

\bibitem{DBLP:journals/ijis/FontanaF02}
Fontana, F.A., Formato, F.: A similarity-based resolution rule. Int. J. Intell. Syst.  \textbf{17}(9),  853--872 (2002). \doi{10.1002/int.10067}

\bibitem{DBLP:conf/agp/FormatoGS99}
Formato, F., Gerla, G., Sessa, M.I.: Extension of logic programming by similarity. In: Meo, M.C., {Vilares Ferro}, M. (eds.) 1999 Joint Conference on Declarative Programming, AGP'99, L'Aquila, Italy, September 6-9, 1999. pp. 397--410 (1999)

\bibitem{DBLP:journals/fuin/FormatoGS00}
Formato, F., Gerla, G., Sessa, M.I.: Similarity-based unification. Fundam. Inform.  \textbf{41}(4),  393--414 (2000). \href{https://doi.org/10.3233/FI-2000-41402}{https://doi.org/10.3233/FI-2000-41402}

\bibitem{DBLP:conf/ssci/GuerreroMRS18}
Guerrero, J.A., Moreno, G., Riaza, J.A., Sanchez, J.: Smart design of similarity relations for fuzzy logic programming environments. In: {IEEE} Symposium Series on Computational Intelligence, {SSCI} 2018, Bangalore, India, November 18-21, 2018. pp. 220--227. {IEEE} (2018). \doi{10.1109/SSCI.2018.8628871}

\bibitem{DBLP:journals/ijar/IranzoMR20}
{Juli{\'{a}}n Iranzo}, P., Moreno, G., Riaza, J.A.: The fuzzy logic programming language {FASILL:} design and implementation. Int. J. Approx. Reason.  \textbf{125},  139--168 (2020). \doi{10.1016/J.IJAR.2020.06.002}

\bibitem{DBLP:journals/fss/IranzoMR23}
{Juli{\'{a}}n Iranzo}, P., Moreno, G., Riaza, J.A.: Some properties of substitutions in the framework of similarity relations. Fuzzy Sets Syst.  \textbf{465},  108510 (2023). \doi{10.1016/J.FSS.2023.03.013}

\bibitem{DBLP:journals/fss/IranzoR15}
Juli{\'{a}}n-Iranzo, P., Rubio{-}Manzano, C.: Proximity-based unification theory. Fuzzy Sets and Systems  \textbf{262},  21--43 (2015). \doi{10.1016/j.fss.2014.07.006}

\bibitem{DBLP:journals/fss/IranzoR17}
{Juli{\'{a}}n Iranzo}, P., Rubio{-}Manzano, C.: A sound and complete semantics for a similarity-based logic programming language. Fuzzy Sets and Systems  \textbf{317},  1--26 (2017). \doi{10.1016/j.fss.2016.12.016}

\bibitem{DBLP:conf/fuzzIEEE/IranzoS18}
{Juli{\'{a}}n Iranzo}, P., S{\'{a}}enz{-}P{\'{e}}rez, F.: An efficient proximity-based unification algorithm. In: 2018 {IEEE} International Conference on Fuzzy Systems, {FUZZ-IEEE} 2018, Rio de Janeiro, Brazil, July 8-13, 2018. pp.~1--8. {IEEE} (2018). \href{https://doi.org/10.1109/FUZZ-IEEE.2018.8491593}{https://doi.org/10.1109/FUZZ-IEEE.2018.8491593}

\bibitem{DBLP:journals/eswa/IranzoS23}
{Juli{\'{a}}n Iranzo}, P., S{\'{a}}enz{-}P{\'{e}}rez, F.: {Bousi{\(\sim\)}Prolog}: Design and implementation of a proximity-based fuzzy logic programming language. Expert Syst. Appl.  \textbf{213}(Part),  118858 (2023). \doi{10.1016/J.ESWA.2022.118858}

\bibitem{DBLP:conf/fqas/KrajciLMOV02}
Krajci, S., Lencses, R., Medina, J., Ojeda{-}Aciego, M., Vojt{\'{a}}s, P.: A similarity-based unification model for flexible querying. In: Andreasen, T., Motro, A., Christiansen, H., Larsen, H.L. (eds.) Flexible Query Answering Systems, 5th International Conference, {FQAS} 2002, Copenhagen, Denmark, October 27-29, 2002, Proceedings. Lecture Notes in Computer Science, vol.~2522, pp. 263--273. Springer (2002). \href{https://doi.org/10.1007/3-540-36109-X\_21}{https://doi.org/10.1007/3-540-36109-X\_21}

\bibitem{DBLP:conf/lopstr/KutsiaP19}
Kutsia, T., Pau, C.: Solving proximity constraints. In: Gabbrielli, M. (ed.) Logic-Based Program Synthesis and Transformation - 29th International Symposium, {LOPSTR} 2019, Porto, Portugal, October 8-10, 2019, Revised Selected Papers. Lecture Notes in Computer Science, vol. 12042, pp. 107--122. Springer (2019). \href{https://doi.org/10.1007/978-3-030-45260-5\_7}{https://doi.org/10.1007/978-3-030-45260-5\_7}

\bibitem{DBLP:journals/fss/LoiaSS04}
Loia, V., Senatore, S., Sessa, M.I.: Similarity-based {SLD} resolution and its role for web knowledge discovery. Fuzzy Sets Syst.  \textbf{144}(1),  151--171 (2004). \doi{10.1016/J.FSS.2003.10.018}

\bibitem{DBLP:conf/fuzzIEEE/Maruyama21}
Maruyama, Y.: Higher-order fuzzy logics and their categorical semantics: Higher-order linear completeness and {Baaz} translation via substructural tripos theory. In: 30th {IEEE} International Conference on Fuzzy Systems, {FUZZ-IEEE} 2021, Luxembourg, July 11-14, 2021. pp.~1--6. {IEEE} (2021). \doi{10.1109/FUZZ45933.2021.9494453}

\bibitem{DBLP:journals/fss/MedinaOV04}
Medina, J., Ojeda{-}Aciego, M., Vojt{\'{a}}s, P.: Similarity-based unification: a multi-adjoint approach. Fuzzy Sets and Systems  \textbf{146}(1),  43--62 (2004). \doi{10.1016/j.fss.2003.11.005}

\bibitem{DBLP:journals/tfs/MilaneseP24}
Milanese, G.C., Pasi, G.: Similarity-based reasoning with order-sorted feature logic. {IEEE} Trans. Fuzzy Syst.  \textbf{32}(5),  2797--2810 (2024). \doi{10.1109/TFUZZ.2024.3362897}

\bibitem{DBLP:journals/logcom/Miller91}
Miller, D.: A logic programming language with lambda-abstraction, function variables, and simple unification. J. Log. Comput.  \textbf{1}(4),  497--536 (1991). \doi{10.1093/LOGCOM/1.4.497}

\bibitem{DBLP:books/daglib/0036008}
Miller, D., Nadathur, G.: Programming with Higher-Order Logic. Cambridge University Press (2012). \doi{10.1017/CBO9781139021326}

\bibitem{DBLP:conf/lics/Nipkow93}
Nipkow, T.: Functional unification of higher-order patterns. In: Proceedings of the Eighth Annual Symposium on Logic in Computer Science {(LICS} '93), Montreal, Canada, June 19-23, 1993. pp. 64--74. {IEEE} Computer Society (1993). \doi{10.1109/LICS.1993.287599}

\bibitem{DBLP:journals/fss/Novak12a}
Nov{\'{a}}k, V.: Elements of model theory in higher-order fuzzy logic. Fuzzy Sets Syst.  \textbf{205},  101--115 (2012). \doi{10.1016/J.FSS.2012.03.006}

\bibitem{RISC6513}
Pau, C.: Symbolic Techniques for Approximate Reasoning. Ph.D. thesis, Research Institute for Symbolic Computation, Johannes Kepler University Linz, Austria (2022)

\bibitem{DBLP:conf/fuzzIEEE/PauK21}
Pau, C., Kutsia, T.: Proximity-based unification and matching for fully fuzzy signatures. In: 30th {IEEE} International Conference on Fuzzy Systems, {FUZZ-IEEE} 2021, Luxembourg, July 11-14, 2021. pp.~1--6. {IEEE} (2021). \doi{10.1109/FUZZ45933.2021.9494438}

\bibitem{DBLP:conf/fuzzIEEE/SandriMMGC12}
Sandri, S.A., Mendon{\c{c}}a, J., Martins{-}Bed{\'{e}}, F.T., Guimar{\~{a}}es, R., Carvalho, O.S.: Weighted fuzzy similarity relations in case-based reasoning: {A} case study in classification. In: {FUZZ-IEEE} 2012, {IEEE} International Conference on Fuzzy Systems, Brisbane, Australia, June 10-15, 2012, Proceedings. pp.~1--7. {IEEE} (2012). \href{https://doi.org/10.1109/FUZZ-IEEE.2012.6251149}{https://doi.org/10.1109/FUZZ-IEEE.2012.6251149}

\bibitem{DBLP:journals/tcs/Sessa02}
Sessa, M.I.: Approximate reasoning by similarity-based {SLD} resolution. Theor. Comput. Sci.  \textbf{275}(1-2),  389--426 (2002). \href{https://doi.org/10.1016/S0304-3975(01)00188-8}{https://doi.org/10.1016/S0304-3975(01)00188-8}

\bibitem{DBLP:journals/entcs/Virtanen02}
Virtanen, H.: Vague domains, {S}-unification, logic programming. Electr. Notes Theor. Comput. Sci.  \textbf{66}(5),  86--103 (2002). \href{https://doi.org/10.1016/S1571-0661(04)80516-4}{https://doi.org/10.1016/S1571-0661(04)80516-4}

\bibitem{DBLP:journals/kybernetika/Vojtas00}
Vojt{\'{a}}s, P.: Declarative and procedural semantics of fuzzy similarity based unification. Kybernetika  \textbf{36}(6),  707--720 (2000), \url{http://www.kybernetika.cz/content/2000/6/707}

\end{thebibliography}

\end{document}